\DeclareMathAlphabet\mathbfcal{OMS}{cmsy}{b}{n}
\DeclareMathOperator{\EX}{\mathbb{E}}
\newcommand{\E}{\mathbb{E}}
\newcommand{\R}{\mathbb{R}}
\newcommand{\bzero}{ {\bf 0} }
\newcommand{\bN}{ {\bf N} }
\newcommand{\bx}{ {\bf x} }
\newcommand{\by}{ {\bf y} }
\newcommand{\bX}{ {\bf X} }
\newcommand{\bR}{ {\bf R} }
\newcommand{\bu}{ {\bf u} }
\newcommand{\bD}{ {\bf D} }
\newcommand{\bW}{ {\bf W} }
\newcommand{\bv}{ {\bf v} }
\newcommand{\br}{ {\bf r} }
\newcommand{\bbeta}{ {\boldsymbol \beta} }
\newcommand{\bmu}{ {\boldsymbol \mu} }
\newcommand{\bOmega}{ {\mathbf \Omega} }
\newcommand{\bzeta}{ {\boldsymbol \zeta} }
\newcommand{\trace}{\mbox{tr}} 
\newcommand{\bpi}{ {\boldsymbol \pi} }
\newcommand{\bRe}{ {\mathbb R} }
\newcommand{\bz}{ {\bf z} }
\newcommand{\argmin}{\mbox{argmin}} 
\newcommand{\argmax}{\mbox{argmax}} 
\newcommand{\sign}{\mbox{sign}} 
\newcommand{\diag}{\mbox{diag}}
\newcommand*\bigcdot{\mathpalette\bigcdot@{.5}}
\newcommand*\bigcdot@[2]{\mathbin{\vcenter{\hbox{\scalebox{#2}{$\m@th#1\bullet$}}}}}
\newcommand{\vast}{\bBigg@{3}}
\newcommand{\Vast}{\bBigg@{4}}
\newcommand\hbound{\underline{h}}
\newcommand\ellbound{\underline{\ell}}
\DeclareMathOperator{\sech}{sech}
\DeclareRobustCommand{\bbone}{\text{\usefont{U}{bbold}{m}{n}1}}
\def\simind{\stackrel{\mbox{\scriptsize{\rm ind}}}{\sim}}
\newtheorem{Theorem}{\bf Theorem}
\newtheorem{Lemma}[Theorem]{\bf Lemma}
\newtheorem{Proposition}[Theorem]{\bf Proposition}
\newtheorem{Corollary}[Theorem]{\bf Corollary}
\author[1]{{\Large Niccol\'o Anceschi}}
\author[2]{{\Large Cristian Castiglione}}
\author[3]{{\Large Tommaso Rigon}}
\author[4]{{\Large Giacomo Zanella}}
\author[4]{{\Large Daniele Durante}}
\affil[1]{Department of Statistical Science, Duke University, Durham,
North Carolina 27708, U.S.A.
niccolo.anceschi@duke.edu (corresponding author)}
\affil[2]{Institute for Data Science and Analytics, Bocconi University, Via Röntgen 1, 20136 Milan, IT. cristian.castiglione@unibocconi.it}
\affil[3]{Department of Economics, Management and Statistics, University of Milano–Bicocca,
Piazza dell’Ateneo Nuovo 1, 20126 Milano, IT. tommaso.rigon@unimib.it}
\affil[4]{Department of Decision Sciences, Bocconi University, Via Röntgen 1, 20136 Milan, IT. giacomo.zanella@unibocconi.it, daniele.durante@unibocconi.it}
\title{\huge Optimal and computationally tractable lower bounds \\ for logistic log-likelihoods}
\date{}
\begin{document}
\maketitle

\begin{abstract}
\fontsize{11}{12}\selectfont
The logit transform is arguably the most widely-employed link function beyond linear~settings. This transformation routinely appears in regression models for binary data and provides a central building-block in popular methods for both classification and regression.~Its widespread~use,~combined with the lack of analytical solutions for the optimization of  objective functions involving~the logit transform, still motivates active research in computational statistics. Among the directions explored, a central one has focused on the design of tangent lower bounds for logistic log-likelihoods that can be tractably optimized, while providing a tight approximation of these log-likelihoods. This has led~to the development of effective minorize-maximize (\textsc{mm}) algorithms~for point estimation, and variational  schemes for approximate~Bayesian inference~under~several~logit models. However, the overarching focus has been on tangent quadratic minorizers. In fact, it is still unclear whether tangent lower bounds sharper than quadratic ones can be derived without undermining the tractability of the resulting minorizer.  This article~addresses~such~a~question through the  design and study of a novel piece-wise quadratic lower bound that uniformly improves any tangent quadratic minorizer, including the sharpest ones, while~admitting~a~direct interpretation in terms of the classical generalized lasso problem. As illustrated in realistic empirical studies, such a sharper bound not only improves the speed of convergence of common \textsc{mm} schemes for penalized maximum likelihood estimation, but also yields tractable variational Bayes (\textsc{vb}) approximations with higher accuracy relative to those obtained under popular quadratic bounds employed in \textsc{vb}.
\vspace{9pt}
\\
\textbf{Keywords:} Logit link, Minorize-Maximize; Piece-wise Quadratic Bounds; Tangent Minorizer; Variational Bayes
\end{abstract}

\fontsize{12}{15}\selectfont

\vspace{5pt}

\section{Introduction}\label{sec_intro_1}
Statistical models lacking analytical results for inference on the corresponding parameters~are~common in both frequentist and Bayesian settings. For example, even basic logistic regression~requires iterative procedures for point estimation. In this context, a convenient strategy to overcome these challenges is to iteratively approximate the log-likelihood of such models through accurate tangent lower bounds admitting tractable maximization. Such a perspective has led to effective variational Bayes (\textsc{vb}) approximations optimizing tractable  lower~bounds of the marginal~likelihood \citep[e.g.,][]{Blei_2017}, and popular expectation-maximization  (\textsc{em}) \citep{em_McLachlan1996} and minorize-maximize (\textsc{mm}) \citep{mm_Hunter2004} schemes for maximum likelihood estimation. In both cases, the resulting procedures face a fundamental  trade-off, determined by the selected class~of lower bounds. Larger and sharper classes are expected to accurately approximate the target log-likelihood at the expense of computational challenges in optimizing~the~selected bound. Conversely, simpler~classes  mitigate these tractability issues, but the resulting~approximation suffers~from reduced accuracy.  Depending on the inference goal,~this~trade-off has different consequences. For example, in \textsc{vb} it   affects the quality of the approximation of the target posterior, while in \textsc{mm} and \textsc{em} schemes it controls the efficiency of the optimization routine, which depends both on the number of iterations and  the cost of each update \citep[][]{mm_Wu2010}.

In this article, we focus on studying and improving several classes of tangent lower  bounds for a family of log-likelihoods that covers a central role in statistics, namely logistic log-likelihoods. In this framework, two popular lower bounds have been developed by \citet{Bohning1988} and \citet{Jaakkola_Jordan_2000} with a focus on tractable quadratic minorizers. The former (\textsc{bl}) exploits a uniform bound on the curvature of the logistic log-likelihood to minorize~its~Hessian, thereby obtaining a tractable lower bound that has been  widely implemented in the literature~\citep[see, e.g.,][]{mm_Hunter2004,mm_Wu2010,khan12_multinomial}, and subsequently extended to multinomial logit \citep[e.g.,][]{Bohning1992MultinomialLR,  Multivariate_sharp_Qbounds, Knowles_Minka_NIPS2011}. The latter (\textsc{pg}) leverages instead a supporting hyperplane inequality~to~obtain~a bound that is still quadratic, yet provably~sharper than the \textsc{bl}  one \citep{DeLeeuw_09_sharQ}. This~tighter~approximation has motivated a widespread use within the \textsc{em},  \textsc{mm} and \textsc{vb} literature \citep[see, e.g.,][]{Lee2010SPARSELP, Ren2011LogisticSP, Carbonetto2012ScalableVI}, along with studies \citep{Durante_Rigon_2019} proving~a~direct link among the \textsc{pg} bound and the P\'{o}lya-Gamma  data augmentation \citep{Polson2012}.

While the impact of \textsc{bl} and \textsc{pg} minorizers should have motivated further improvements of~such bounds, research along this direction has been~limited. In fact, as clarified in Section~\ref{subsec:quadratic_bounds},  \textsc{pg}~is~optimal among~the~tangent quadratic minorizers of logistic log-likelihoods. Thus, sharper solutions in the quadratic family cannot~be derived. Conversely, it is still unclear how this class can~be~enlarged for obtaining sharper, yet tractable, alternatives. In Section~\ref{sec_PLQ} we address this~gap~by~designing~a provably-sharper piece-wise quadratic (\textsc{pq}) minorizer that is optimal in a class of tangent lower bounds broader than the quadratic one,  while preserving tractability. This bound arises from a improvement over the classical supporting hyperplane inequality, which leads to  a tangent minorizer that can~be~derived analytically and gains sharpness via $L_1$ terms. Such a tractability~is~in~contrast with available piece-wise quadratic minorizers \citep[][]{Marlin_ICML2011_Piecewise_Q_bounds,khan12_multinomial}, which cannot be obtained analytically. In addition, as clarified in Section~\ref{sec_PLQ}, the \textsc{pq} minorizer we derive  can be interpreted as  a standard  generalized lasso problem \citep{gen_lasso_solution_path_2011}. This  facilitates optimization of the proposed minorizer through available schemes for generalized~lasso. 
The broad scope and practical advantages of this bound are illustrated in Section~\ref{sec_Penalized_Log_Reg} with a focus~on penalized maximum likelihood estimation and \textsc{vb}. In the former,~the proposed method substantially reduces the iterations to convergence and the total execution time, while in the latter~it~yields a more accurate approximation of the target posterior without increasing the computational costs. This latter result motivates extensive use of the proposed \textsc{pq} bound in variational inference as an improved alternative to \textsc{pg}. Proofs and further results can be found in the Supplementary Material.

\vspace{5pt}

\section{Tangent lower bounds of logistic log-likelihoods}
\label{sec_logistic_bounds}
\vspace{-3pt}
Let $\by = (y_1,\dots,y_n)^\top$ denote a vector of  independent  Bernoulli variables with success probabilities $\pi_i = \pi(\bx_i^\top \bbeta) = (1+e^{-\bx_i^\top \bbeta})^{-1}$, depending on a vector $\bbeta = (\beta_1,\dots,\beta_p)^\top \in \bRe^p$ of regression parameters  and on a set of observed predictors $\bx_i = (x_{i1},\dots,x_{ip})^\top \in \bRe^p $, for $i=1,\dots,n$.~The focus of this article is on finding tight but tractable tangent lower bounds of the induced log-likelihood  
\begin{align}\label{log_lik_log}
\ell(\bbeta) &= \sum\nolimits_{i=1}^n \log p(y_i \mid \bbeta, \bx_i) = \sum\nolimits_{i=1}^n \big[y_i \, \bx_i^\top \bbeta - \log ( 1+e^{\bx_i^\top \bbeta} ) \big].
\end{align}
Rewriting each  term $\log p(y_i \mid \bbeta, \bx_i)$ as $(y_i -0.5) \bx_i^\top \bbeta + h(\bx_i^\top \bbeta)$ with $h({r}) = -\log(e^{r/2}+e^{-r/2})$ for $r\in\bRe$, the construction of a global tangent minorizer for $\ell(\bbeta)$ can proceed by lower bounding the one-dimensional function $h$. In particular, we will focus our attention on constructing tangent lower bounds $\hbound:(r, \zeta)\in\bRe^2\mapsto \hbound(r \mid \zeta)\in\bRe$ such that
\begin{equation}\label{mm_def_minorization}
\begin{split}
    h(\zeta) &= \hbound(\zeta \mid \zeta)\qquad \hbox{and}\qquad
    h(r) \geq \hbound(r \mid \zeta) \qquad \forall \, r,\zeta \in \bRe \, .\vspace{-1pt}
\end{split}
\end{equation}
Equivalently, $\hbound$ defines a family of lower bounds indexed by a parameter $\zeta$, which denotes the location where $\hbound(\cdot\mid\zeta)$ is tangent to $h$.
Setting the tangent location for the $i$-th term  to \smash{$\zeta_i=\bx_i^\top \widetilde{\bbeta}$}, 
leads to a bound for $\ell(\bbeta)$ of the form
\begin{align}\label{eq:glob_bound}
\ell(\bbeta) &\geq 
\ellbound(\bbeta \mid \widetilde{\bbeta})=
\sum\nolimits_{i=1}^n \big[ (y_i -0.5) \bx_i^\top \bbeta + \hbound(\bx_i^\top \bbeta\mid  \bx_i^\top \widetilde{\bbeta}) \big]
\,,& \forall \, \bbeta\in\bRe^p,  \ \widetilde{\bbeta}\in\bRe^p\,.
\end{align}
The value of $\widetilde{\bbeta}$ or, equivalently, of the tangent locations $\{ \zeta_i \}_{i=1}^n$, can then be updated iteratively in order to optimize the bound in \eqref{eq:glob_bound} according to some criterion of interest, as done in, e.g., \textsc{mm} and \textsc{vb} schemes. Clearly, to be useful in practice, $\hbound$ (and hence $\ellbound$) must be available analytically at each update and should be designed in a way that allows for tractable maximization. As discussed in Section~\ref{subsec:quadratic_bounds}, this has motivated a major focus in the literature on tangent~quadratic~lower~bounds.

\vspace{6pt}

\section{Tangent quadratic lower bounds}
\label{subsec:quadratic_bounds}
\vspace{-1pt}
Within the class of tangent quadratic minorizers of logistic log-likelihoods, the one proposed by \citet{Bohning1988} (\textsc{bl})  provides a seminal construction that exploits a uniform bound to the curvature of the logistic log-likelihood. Focusing on the one-dimensional function~$h$,~this~yields
\begin{equation}\label{bl_bound_h}
    h(r) 
    \geq
    \hbound_{\textsc{bl}}(r \mid \zeta) 
    := 
    h(\zeta) + 
    h'(\zeta) (r-\zeta) - 0.25(r-\zeta)^2/2 \, ,
\end{equation}
which follows by noticing that $h''(r)\in [ -0.25,0 )$, $\forall \; r\in\bRe$. Albeit providing a tractable~minorizer, such a bound can be further improved in terms of quality in approximating $h$ within~the tangent quadratic class. Such a direction has been explored in \citet{Jaakkola_Jordan_2000}~by~reparametrizing $h(r)$ as a function of the squared linear predictor $\rho = r^2$. This yields a function $\tilde{h}(\rho) := h(\sqrt{\rho})$ which is convex in $\rho$ and, hence, can be lower bounded with its tangent line at any given location. Therefore
\begin{equation}\label{eq_JJ_transformed}
    \tilde{h}(\rho) 
    \geq
    \tilde{h}(\varphi) + 
    \tilde{h}'(\varphi) (\rho-\varphi),
\end{equation}
for any location $\varphi \in \mathbb{R}^+$. The same relation holds true also when transforming  the problem into the original space, leading to the \textsc{pg} bound defined as
\begin{equation}\label{eq_JJ_transformed_2}
\begin{split}
    h(r) \geq \hbound_{\textsc{pg}}(r \mid \zeta) &:= \tilde{h}(\zeta^2) + 
   \tilde{h}' (\zeta^2) (r^2-\zeta^2) = h(\zeta) - \tanh(\zeta/2) (r^2-\zeta^2)/(4 \zeta) \\
    & \; = h(\zeta) +  
    h'(\zeta) (r-\zeta) - w_{\textsc{pg}}(\zeta)(r-\zeta)^2/2\,,\end{split}
\end{equation}
where $w_{\textsc{pg}}(\zeta) =  (2 \zeta)^{-1}\tanh(\zeta/2)$. Since $w_{\textsc{pg}}(\zeta)\in (0, 0.25] $ for any $\zeta \in \bRe$, the \textsc{bl} bound~is~a~tangent minorizer of the \textsc{pg} one. Hence, $\hbound_{\textsc{pg}}$ uniformly dominates $\hbound_{\textsc{bl}}$, thereby providing~a~more~accurate characterization of the target $\ell$, while preserving the tractability of the quadratic bounds. Although being developed under purely mathematical arguments, as shown by \citet{Durante_Rigon_2019},  such a bound admits a direct interpretation under the P\'{o}lya-Gamma data-augmentation \citep{Polson2012}. This facilitates the development of \textsc{em} algorithms for maximum likelihood estimation under $\ell(\bbeta)$ and closed-form coordinate ascent variational inference schemes for approximate Bayesian inference on $\bbeta$, under Gaussian prior (see the Supplementary Material~for~more details).

\vspace{5pt}
\subsection{Optimality properties of the quadratic \textsc{pg} lower bound}\label{PG_optimality}
When designing a tangent quadratic minorizer $\hbound$ for $h$ satisfying \eqref{mm_def_minorization}, the only tunable quantity is  $\hbound''(r \mid \zeta)$. Hence,  the relative tightness~of~two quadratic minorizers only depends~on~the~respective curvatures. This allowed us to show that $\hbound_{\textsc{pg}}(r \mid \zeta) \geq \hbound_{\textsc{bl}}(r \mid \zeta)$ for any $r,\zeta \in \bRe$.~Lemma~\ref{Lemma_optimality_pg_1D_space}~extends such result by stating \textsc{pg}  optimality in the family of all  tangent quadratic minorizers, beyond~\textsc{bl}.
\begin{Lemma}\label{Lemma_optimality_pg_1D_space}
\itshape Define the  family of all quadratic tangent minorizers for $h$ as
\begin{equation*}
    \mathcal{H}_{\textsc{q}} = \vast\{ \;
    \begin{aligned}
        & \hspace{40pt} \hbound: \bRe\times\bRe\rightarrow\bRe \quad \text{s.t.} \; \eqref{mm_def_minorization} \; \hbox{holds and}
        \\
        &\hbound(r \mid \zeta)= a(\zeta) + b(\zeta)r +c(\zeta)r^2 \quad \text{for some} \quad a,b,c
    \end{aligned}
    \; \vast\}\,.
\end{equation*}
Then, for any $\hbound_{\textsc{q}}\in \mathcal{H}_{\textsc{q}}$, it holds that    $ h(r) \geq \hbound_{\textsc{pg}}(r \mid \zeta) \geq \hbound_{\textsc{q}}(r \mid \zeta)$, $\forall \, r,\zeta \in \bRe$.
\end{Lemma}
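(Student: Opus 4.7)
The strategy is to reduce the problem to choosing a single scalar per tangent location, and then to identify the optimal value of this scalar by exploiting the evenness of $h$.

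First I would show that any $\hbound_{\textsc{q}}\in\mathcal{H}_{\textsc{q}}$ is completely determined by its curvature. The tangency $\hbound_{\textsc{q}}(\zeta\mid\zeta)=h(\zeta)$ combined with the global minorization $h(r)\geq\hbound_{\textsc{q}}(r\mid\zeta)$ makes $\zeta$ a minimum of the smooth function $r\mapsto h(r)-\hbound_{\textsc{q}}(r\mid\zeta)$, forcing $\hbound_{\textsc{q}}'(\zeta\mid\zeta)=h'(\zeta)$. These two identities determine $a(\zeta)$ and $b(\zeta)$ as functions of $c(\zeta)$, producing the canonical form
\begin{equation*}
\hbound_{\textsc{q}}(r\mid\zeta)=h(\zeta)+h'(\zeta)(r-\zeta)+c(\zeta)(r-\zeta)^2.
\end{equation*}
A pointwise comparison of two such bounds yields $\hbound_1(r\mid\zeta)-\hbound_2(r\mid\zeta)=(c_1(\zeta)-c_2(\zeta))(r-\zeta)^2$, so a larger curvature corresponds to a uniformly tighter minorizer. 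Finding the sharpest element of $\mathcal{H}_{\textsc{q}}$ thus reduces, for each $\zeta$, to computing
\begin{equation*}
c^{\star}(\zeta)=\sup\bigl\{c\in\bRe\,:\,h(r)\geq h(\zeta)+h'(\zeta)(r-\zeta)+c(r-\zeta)^2\,\text{ for all }r\in\bRe\bigr\}.
\end{equation*}

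Next I would leverage the evenness of $h$ to pin down $c^{\star}(\zeta)$. Evaluating the admissibility constraint at the symmetric point $r=-\zeta$ gives $h(-\zeta)=h(\zeta)$, and using $h'(\zeta)=-\tanh(\zeta/2)/2$ the inequality collapses to $c\leq -\tanh(\zeta/2)/(4\zeta)=-w_{\textsc{pg}}(\zeta)/2$, which matches exactly the curvature coefficient of $\hbound_{\textsc{pg}}$ in \eqref{eq_JJ_transformed_2}. Since the excerpt has already established, via the convexity of $\tilde{h}$ in \eqref{eq_JJ_transformed}, that $\hbound_{\textsc{pg}}$ satisfies the tangent lower-bound property for every $r$, this upper bound on $c$ is attained, so $c^{\star}(\zeta)=-w_{\textsc{pg}}(\zeta)/2$. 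The degenerate case $\zeta=0$ is handled by continuity, or equivalently by using the Taylor expansion $h(r)=-\log 2 - r^2/8 + O(r^4)$ to recover $c^{\star}(0)=-1/8$. Combining this identification with the curvature-monotonicity observation of the first step yields $\hbound_{\textsc{pg}}(r\mid\zeta)\geq\hbound_{\textsc{q}}(r\mid\zeta)$ for every $\hbound_{\textsc{q}}\in\mathcal{H}_{\textsc{q}}$ and every $r,\zeta\in\bRe$, while the inequality $h(r)\geq \hbound_{\textsc{pg}}(r\mid\zeta)$ is already in hand.

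I expect the main obstacle to be conceptual rather than computational. The delicate points are ensuring that the reduction in the first step is airtight (in particular, that the bare tangency condition in \eqref{mm_def_minorization} genuinely forces matching of first derivatives for any quadratic candidate without additional regularity assumptions), and recognizing that the single symmetric evaluation $r=-\zeta$ is enough to pin down the binding constraint on the curvature. Once these two facts are in place, the lemma follows immediately from the achievability of the resulting optimal curvature by $\hbound_{\textsc{pg}}$.
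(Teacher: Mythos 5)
Your proposal is correct and follows essentially the same route as the paper's proof: both reduce the problem to a comparison of curvatures via the tangency and first-derivative conditions, and both obtain the binding constraint $c(\zeta)\leq -w_{\textsc{pg}}(\zeta)/2$ from the single symmetric evaluation at $r=-\zeta$ combined with the evenness of $h$. Your explicit treatment of the degenerate case $\zeta=0$ and the justification of the matching first derivatives are slightly more careful than the paper's, but the argument is the same.
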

The above result is directly related to the sharpness property proved by \citet{DeLeeuw_09_sharQ} for $\hbound_{\textsc{pg}}$ leveraging the symmetry of the target function and of $\hbound_{\textsc{pg}}(r \mid \zeta)$. In the Supplementary Material we provide a novel proof which gives a more direct intuition and set the foundations to develop the sharper piece-wise minorizer proposed in Section~\ref{sec_PLQ}. Notice that Lemma~\ref{Lemma_optimality_pg_1D_space} directly translates~into~an~optimality result for the \textsc{pg} bound among the quadratic and separable tangent minorizers of the target logistic log-likelihood $\ell$ in the  $\bbeta$ space. In particular, replacing $h$ with $\hbound_{\textsc{pg}}$ in \eqref{eq:glob_bound} yields a tangent quadratic minorizer $\ellbound_{\textsc{pg}}$ of $\ell$ with the optimality properties  in Corollary~\ref{Corollary_optimality_pg_beta_space}.

  \begin{Corollary}\label{Corollary_optimality_pg_beta_space}
\itshape Define the  family of all quadratic and separable tangent minorizers for $\ell$ as
\begin{equation*}
    \mathcal{G}_{\textsc{q}} \hspace{-3pt} \, = \, \hspace{-3pt} \vast\{ \;
    \begin{aligned}
        & \ellbound: \bRe^p\times\bRe^p \rightarrow \bRe \;\; \hbox{s.t.} \;\; \ell(\widetilde{\bbeta}) = \ellbound(\widetilde{\bbeta} \mid \widetilde{\bbeta}) \;\; \text{and} \;\;
        \ell(\bbeta) \geq \ellbound(\bbeta \mid \widetilde{\bbeta}) \;\; \forall \bbeta,\widetilde{\bbeta} \in \bRe^p, \; \text{and}
        \\
        &
        \ellbound(\bbeta \mid \widetilde{\bbeta}) = \textstyle\sum\nolimits_{i=1}^n [ (y_i -0.5) \bx_i^\top \bbeta + \hbound_{\textsc{q},i}(\bx_i^\top \bbeta\mid  \bx_i^\top \widetilde{\bbeta}) ], \ \text{with} \ \hbound_{\textsc{q},i}  \in \mathcal{H}_{\textsc{q}}, \forall \; i=1, \ldots, n
    \end{aligned}
    \, \vast\} \hspace{-1pt}.
\end{equation*}
Then, for any $\ellbound_{\textsc{q}}\in \mathcal{G}_{\textsc{q}}$, it holds that    $\ell(\bbeta) \geq \ellbound_{\textsc{pg}}(\bbeta \mid \widetilde{\bbeta}) \geq \ellbound_{\textsc{q}}(\bbeta \mid \widetilde{\bbeta})$, $\forall \, \bbeta,\widetilde{\bbeta} \in \bRe^p$.
\end{Corollary}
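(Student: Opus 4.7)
The plan is to deduce the corollary from Lemma~\ref{Lemma_optimality_pg_1D_space} by exploiting the fact that membership in $\mathcal{G}_{\textsc{q}}$ forces a separable decomposition whose one-dimensional building blocks are exactly the quadratic tangent minorizers of $h$ covered by Lemma~\ref{Lemma_optimality_pg_1D_space}. In particular, I would first fix an arbitrary $\ellbound_{\textsc{q}}\in \mathcal{G}_{\textsc{q}}$ and, by the definition of that class, write
\begin{equation*}
\ellbound_{\textsc{q}}(\bbeta \mid \widetilde{\bbeta}) = \sum\nolimits_{i=1}^n \big[ (y_i -0.5) \bx_i^\top \bbeta + \hbound_{\textsc{q},i}(\bx_i^\top \bbeta \mid  \bx_i^\top \widetilde{\bbeta}) \big]
\end{equation*}
with $\hbound_{\textsc{q},i} \in \mathcal{H}_{\textsc{q}}$ for every $i$.

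Next I would apply Lemma~\ref{Lemma_optimality_pg_1D_space} pointwise to each of the $n$ one-dimensional tangent bounds: setting $r = \bx_i^\top \bbeta$ and $\zeta = \bx_i^\top \widetilde{\bbeta}$, the lemma gives
\begin{equation*}
h(\bx_i^\top \bbeta) \;\geq\; \hbound_{\textsc{pg}}(\bx_i^\top \bbeta \mid \bx_i^\top \widetilde{\bbeta}) \;\geq\; \hbound_{\textsc{q},i}(\bx_i^\top \bbeta \mid \bx_i^\top \widetilde{\bbeta})
\end{equation*}
for every $i$ and every $\bbeta,\widetilde{\bbeta} \in \bRe^p$. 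Summing the resulting inequalities over $i=1,\dots,n$ after adding the common linear terms $(y_i-0.5)\bx_i^\top\bbeta$ yields $\ell(\bbeta) \geq \ellbound_{\textsc{pg}}(\bbeta \mid \widetilde{\bbeta}) \geq \ellbound_{\textsc{q}}(\bbeta \mid \widetilde{\bbeta})$, which is the claim. The tangency identity $\ell(\widetilde{\bbeta}) = \ellbound_{\textsc{pg}}(\widetilde{\bbeta} \mid \widetilde{\bbeta})$, needed to ensure that $\ellbound_{\textsc{pg}}$ itself belongs to $\mathcal{G}_{\textsc{q}}$, follows analogously from $h(\zeta) = \hbound_{\textsc{pg}}(\zeta \mid \zeta)$ applied termwise.

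There is essentially no obstacle beyond book-keeping: the only substantive content is Lemma~\ref{Lemma_optimality_pg_1D_space}, and the corollary is a direct lifting from the univariate function $h$ to $\ell$ via the separable structure built into the definition of $\mathcal{G}_{\textsc{q}}$. The sole point worth being explicit about is that the separability assumption is crucial, since it is what allows the pointwise optimality of $\hbound_{\textsc{pg}}$ to propagate into a global dominance statement for $\ellbound_{\textsc{pg}}$; without it, one could in principle exploit cross-terms between different $\bx_i^\top\bbeta$ to design a tighter (non-separable) quadratic minorizer of $\ell$, and the corollary would no longer follow from Lemma~\ref{Lemma_optimality_pg_1D_space} alone.
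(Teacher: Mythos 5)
Your proposal is correct and follows essentially the same route as the paper: both reduce the claim to Lemma~\ref{Lemma_optimality_pg_1D_space} by exploiting the separable structure built into $\mathcal{G}_{\textsc{q}}$. The only cosmetic difference is that the paper phrases the second inequality as a proof by contradiction (assuming some $\bbeta^*$ where $\ellbound_{\textsc{pg}}$ is strictly dominated and locating an offending index $i^*$), whereas you sum the termwise inequalities directly, which is logically equivalent and arguably cleaner.
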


Section~\ref{sec_PLQ} develops a novel minorizer that is even sharper than \textsc{pg} and preserves tractability.

\vspace{7pt}
\section{A novel piece-wise quadratic (\textsc{PQ}) lower bound} \label{sec_PLQ}
While tractable, tangent quadratic minorizers may have limited accuracy due~to the inability to control quantities beyond the curvature terms. This has motivated a focus on more sophisticated tangent minorizers, including piece-wise quadratic ones \citep[see, e.g.,][]{Marlin_ICML2011_Piecewise_Q_bounds,khan12_multinomial}.
However, as discussed in Section~\ref{sec_intro_1}, advances in this direction have been limited due to the lack of simple and tractable solutions. In this section we introduce a novel piece-wise quadratic (\textsc{pq}) tangent lower bound of the logistic log-likelihood, that is provably~sharper~than~\textsc{pg}~and~improves the tractability of the available piece-wise quadratic minorizers. In particular, unlike the bounds in \citet{Marlin_ICML2011_Piecewise_Q_bounds} and \citet{khan12_multinomial} it does not require solving an internal~optimization problem to derive  the bound itself, but rather is directly available as for \textsc{bl}~and~\textsc{pq}.

Assuming again a separable structure as in \eqref{eq:glob_bound} for the overall bound $\ellbound$, we derive the proposed \textsc{pq} tangent minorizer by  complementing each quadratic term with an additional piece-wise linear contribution, proportional to the $L_1$-norm $\lvert \bx_i^\top \bbeta \rvert$ of the linear predictor $ \bx_i^\top \bbeta \in \mathbb{R}$. More specifically, working under the same transformed space as in \citet{Jaakkola_Jordan_2000}, it can be noticed~that the tightness of the \textsc{pg} minorization can be possibly improved by including non-linear terms~in~the right-hand-side of  \eqref{eq_JJ_transformed}. As clarified in the following, including a term proportional to $\sqrt{\rho}$ achieves an effective balance between increased sharpness and limited reduction in tractability. This yields
\begin{equation*}
    \tilde{\hbound}_{\textsc{pq}}(\rho \mid \varphi) 
    := 
    \tilde{h}(\varphi) -  \widetilde{w}_{\textsc{pq}}(\varphi) (\rho-\varphi)/2 - \widetilde{\nu}_{\textsc{pq}}(\varphi)(\sqrt{\rho}- \sqrt{\varphi}).
\end{equation*} 
While \smash{$\tilde{h}(\varphi) = \tilde{\hbound}_{\textsc{pq}}(\varphi \mid \varphi) $} is clearly satisfied, \smash{$\tilde{h}(\rho) \geq 
\tilde{\hbound}_{\textsc{pq}}(\rho \mid \varphi) $} must be ensured by constraining the coefficients $ \widetilde{w}_{\textsc{pq}}(\varphi) $ and $\widetilde{\nu}_{\textsc{pq}}(\varphi)$. To this end, recall that the continuity of $\tilde{h}(\rho) $ and $\tilde{\hbound}_{\textsc{pq}}(\rho \mid \varphi)$, together with \smash{$\tilde{h}(\varphi) = \tilde{\hbound}_{\textsc{pq}}(\varphi \mid \varphi) $}, imposes the first constraint \smash{$[\partial\tilde{\hbound}_{\textsc{pq}}(\rho \mid \varphi)/\partial \rho] | _{\rho=\varphi}=\tilde{h}'(\varphi)$}. As for the second, note that $\widetilde{\nu}_{\textsc{pq}}(\varphi)=0$ restores the $\textsc{pg}$ bound, while $\widetilde{\nu}_{\textsc{pq}}(\varphi) < 0 $ leads to a worse concave minorizer bounding $\tilde{\hbound}_{\textsc{pg}}$ from below.  Conversely, any $\widetilde{\nu}_{\textsc{pq}}(\varphi)>0 $ yields a convex function, having \smash{$\tilde{\hbound}_{\textsc{pg}}$} as a tangent lower bound (see Figure~\ref{figure_plq_r2}). However, excessively~large values of $\widetilde{\nu}_{\textsc{pq}}(\varphi)$ would violate \smash{$\tilde{h}(\rho) \geq \tilde{\hbound}_{\textsc{pq}}(\rho \mid \varphi) $}. A solution is to progressively increase $\widetilde{\nu}_{\textsc{pq}}(\varphi)>0$~until~the~first~point~of contact between $\tilde{h}(\rho)$ and \smash{$\tilde{\hbound}_{\textsc{pq}}(\rho \mid \varphi)$}. As clarified in Proposition~\ref{prop:PLQ_dom_PG} and Lemma~\ref{Lemma_optimality_plq}, setting the point of contact at $0$ (i.e., including the constraint \smash{$\tilde{h}(0) = \tilde{\hbound}_{\textsc{pq}}(0 \mid \varphi)$}) yields the desired tangent piece-wise quadratic minorizer with optimality properties. 

Combining the two above constraints, we obtain $\widetilde{w}_{\textsc{pq}}(\varphi)= 2(\tilde{h}(\varphi) - \tilde{h}(0) - 2 \, \varphi \, \tilde{h}'(\varphi) )/\varphi$ and \smash{$   \widetilde{\nu}_{\textsc{pq}}(\varphi)= - 2( \tilde{h}(\varphi) - \tilde{h}(0) - \varphi \, \tilde{h}'(\varphi) )/\sqrt{\varphi}$}. In the original parametrization this yields
\begin{align}
\label{plq_original_space}
    &  \hbound_{\textsc{pq}}(r \mid \zeta) 
   := h(\zeta) -  w_{\textsc{pq}}(\zeta) (r^2-\zeta^2)/2 - \nu_{\textsc{pq}}(\zeta)(\lvert r \rvert- \lvert \zeta \rvert),\\
    w_{\textsc{pq}}(\zeta)&= 2 \, w_{\textsc{pg}}(\zeta) - 2 \log \cosh \big(\zeta/2 \big) /\zeta^2\,,\qquad
    \nu_{\textsc{pq}}(\zeta)= \lvert \zeta \rvert \big(w_{\textsc{pg}}(\zeta)-w_{\textsc{pq}}(\zeta) \big) \, .   \nonumber 
\end{align}
As suggested by the above discussion, the \textsc{pq} solution is expected to minorize $h(r)$ through~a bound that uniformly dominates the \textsc{pg} one.  These results are formalized in Proposition~\ref{prop:PLQ_dom_PG}.~As~clarified in the proof in the Supplementary Material, the fact that $\hbound_{\textsc{pq}}$ is a lower bound to $h$ relies~on the symmetry of $h$ and, more importantly, on the fact that its curvature is monotone~increasing with $r$.  In this sense, the proof of Proposition~\ref{prop:PLQ_dom_PG} requires more subtle arguments relative~to~classical inequalities employed in the derivation of standard tangent lower bounds.

\begin{figure}[t!]
\centering
\raisebox{0.1cm}{\includegraphics[width=0.48\textwidth]{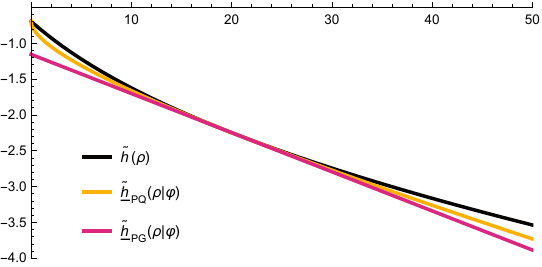}}
\hspace{2pt}
\includegraphics[width=0.48\textwidth]{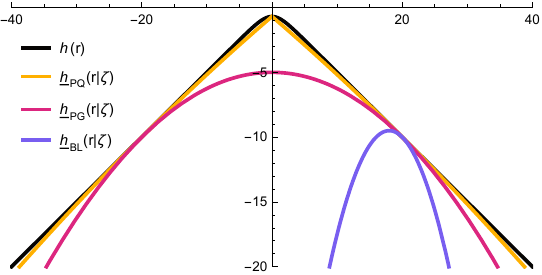}
  \caption{Left: comparison between  $\tilde{\hbound}_{\textsc{pq}}(\rho \mid \varphi)$,  $\tilde{\hbound}_{\textsc{pg}}(\rho \mid \varphi)$ and $\tilde{h}(\rho)$ as a function of $\rho = r^2$, with $\varphi=20$.  Right:~Comparison between   $\hbound_{\textsc{pq}}(r \mid \zeta)$,      $\hbound_{\textsc{pg}}(r \mid \zeta)$, $\hbound_{\textsc{bl}}(r \mid \zeta)$ and $h(r)$ as a function of $r$, with $\zeta=20$.}
  \vspace{-5pt}
    \label{figure_plq_r2}
\end{figure}

\begin{Proposition}\label{prop:PLQ_dom_PG}
\itshape Define $\hbound_{\textsc{pq}}(r \mid \zeta) $ as in \eqref{plq_original_space}. Then,  $h(r) \geq \hbound_{\textsc{pq}}(r \mid \zeta) \geq \hbound_{\textsc{pg}}(r \mid \zeta)$, $\forall \, r,\zeta\in\bRe$.
\end{Proposition}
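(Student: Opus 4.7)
The plan is to prove the two inequalities separately, exploiting throughout the evenness of $h$ and the fact that $w_{\textsc{pq}}(\zeta)$, $\nu_{\textsc{pq}}(\zeta)$ and $\hbound_{\textsc{pq}}(r\mid\zeta)$ depend on $r,\zeta$ only through $\lvert r\rvert, r^2, \lvert\zeta\rvert,\zeta^2$; this immediately reduces both claims to $r,\zeta\geq 0$ and allows me to argue equivalently in the transformed space $\rho=r^2$, $\varphi=\zeta^2$ of \eqref{eq_JJ_transformed}.

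For $\hbound_{\textsc{pq}}(r\mid\zeta)\geq \hbound_{\textsc{pg}}(r\mid\zeta)$, I would work in $\rho$-space. There $\tilde{\hbound}_{\textsc{pg}}(\cdot\mid\varphi)$ is the tangent line to the convex $\tilde h$ at $\varphi$, while $\tilde{\hbound}_{\textsc{pq}}(\cdot\mid\varphi)$ adds the contribution $-\widetilde{\nu}_{\textsc{pq}}(\varphi)(\sqrt{\rho}-\sqrt{\varphi})$. First, $\widetilde{\nu}_{\textsc{pq}}(\varphi)\geq 0$ follows from applying the subgradient inequality for the convex $\tilde h$ at $\varphi$ to the point $0$. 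Since $-\sqrt{\rho}$ is convex in $\rho$, the added term is then convex, so the difference $\tilde{\hbound}_{\textsc{pq}}-\tilde{\hbound}_{\textsc{pg}}$ is convex in $\rho$, vanishes at $\rho=\varphi$, and has zero derivative there because both bounds are tangent to $\tilde h$ at $\varphi$; hence its minimum over $\rho\geq 0$ is $0$, yielding the first inequality.

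For $h(r)\geq \hbound_{\textsc{pq}}(r\mid\zeta)$ with $r,\zeta\geq 0$, I would introduce $F(r)=h(r)-\hbound_{\textsc{pq}}(r\mid\zeta)$. By the defining constraints of $\hbound_{\textsc{pq}}$ one has $F(\zeta)=F'(\zeta)=0$ (tangency at $\zeta$) and $F(0)=0$ (the second contact point at the origin). For $r>0$, $F''(r)=h''(r)+w_{\textsc{pq}}(\zeta)$; the key curvature-monotonicity property that $h''(r)=-\sech^2(r/2)/4$ is strictly increasing in $\lvert r\rvert$ (emphasised in the excerpt) then makes $F''$ strictly increasing on $(0,\infty)$. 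A second-order Taylor expansion at $\zeta$ gives the representation $F(r)=\int_\zeta^r(r-u)F''(u)\,du$, and specialising at $r=0$ (using continuity of $F$ from the right at $0$) yields the crucial identity $\int_0^\zeta u\,F''(u)\,du=0$. Together with the monotonicity of $F''$, this identity forces $F''$ to change sign exactly once on $(0,\zeta)$ at some $r^\ast$, being strictly negative on $(0,r^\ast)$ and strictly positive on $(r^\ast,\infty)$.

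The delicate step, and the main obstacle, is to conclude $F\geq 0$ from this shape information. For $r\geq\zeta$ this is immediate from the Taylor representation, since $r-u\geq 0$ and $F''(u)>0$ throughout $[\zeta,r]$. For $0\leq r\leq\zeta$, differentiating the representation gives $F'(r)=-\int_r^\zeta F''(u)\,du$, so $F'$ is decreasing on $[0,r^\ast]$ and increasing on $[r^\ast,\zeta]$; combined with $F'(0^+)=\nu_{\textsc{pq}}(\zeta)\geq 0$ and $F'(\zeta)=0$, this forces $F'$ to have a unique sign change at some $r_1\in(0,r^\ast)$, making $F$ unimodal on $[0,\zeta]$ with $F(0)=F(\zeta)=0$, hence $F\geq 0$ throughout. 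Symmetry in $r$ and $\zeta$ then extends the conclusion to all $r,\zeta\in\mathbb{R}$.
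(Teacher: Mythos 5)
Your proof is correct and takes essentially the same route as the paper: the reduction by symmetry, the double contact of $\hbound_{\textsc{pq}}(\cdot\mid\zeta)$ with $h$ at $0$ and at $\zeta$, and the monotonicity of $h''$ on $(0,\infty)$ driving a single sign change of $F''$ and hence unimodality of $F$ on $[0,\zeta]$ is exactly the content of the paper's technical Lemma~\ref{lemma:third_der} applied to $f=-F'$, while your convexity argument for $-\widetilde{\nu}_{\textsc{pq}}(\varphi)\sqrt{\rho}$ in the transformed space is the same inequality as the paper's quadratic majorization $\lvert r\rvert\leq \tfrac{1}{2}(r^2/\lvert\zeta\rvert+\lvert\zeta\rvert)$. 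The only substantive (and welcome) addition is that you explicitly verify $\widetilde{\nu}_{\textsc{pq}}(\varphi)\geq 0$ via the subgradient inequality for $\tilde h$ at $0$, a fact the paper's proof uses implicitly.
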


More generally, it is possibile to state the optimality property for \textsc{pq} in Lemma~\ref{Lemma_optimality_plq}. 
\begin{Lemma}\label{Lemma_optimality_plq}
\itshape Define the  family of all $L_1$-augmented tangent quadratic minorizers for $h$ as
\begin{equation*}
    \mathcal{H}_{\textsc{s}} = \vast\{ \;
    \begin{aligned}
        & \hspace{65pt} \hbound: \bRe\times\bRe\rightarrow\bRe \quad \text{s.t.} \; \eqref{mm_def_minorization} \; \hbox{holds and}
        \\
        &\hbound(r \mid \zeta)= a(\zeta) + b(\zeta)r +c(\zeta)r^2+d(\zeta)|r| \quad \text{for some} \quad a,b,c,d 
    \end{aligned}
    \; \vast\}\,.
\end{equation*}
Then, for any $\hbound_{\textsc{s}}\in \mathcal{H}_{\textsc{s}}$, it holds that $h(r) \geq  \hbound_{\textsc{pq}}(r \mid \zeta) \geq \hbound_{\textsc{s}}(r \mid \zeta)$, $\forall \, r, \zeta \in \bRe$.
\end{Lemma}
The accuracy gain of \textsc{pq} relative to \textsc{pg} and $\textsc{bl}$ is illustrated in Figure \ref{figure_plq_r2}, for $\zeta =20$. The~three bounds coincide in the limit of $\vert \zeta \vert$ going to $0$. Conversely, the larger $\vert \zeta \vert$, the more remarkable is the relative improvement in the approximation accuracy given by the \textsc{pq} bound. Notice that, similarly to Corollary~\ref{Corollary_optimality_pg_beta_space}, replacing $h$ with $\hbound_{\textsc{pq}}$ in \eqref{eq:glob_bound} yields a tangent  minorizer $\ellbound_{\textsc{pq}}$ of $\ell$ which directly inherits the optimality properties of $\hbound_{\textsc{pq}}$ stated in Lemma~\ref{Lemma_optimality_plq}. As clarified~in~Section~\ref{sec_plq_gen_lasso},~this~minorizer~also admits a direct interpretation as the negative loss of a generalized~lasso~problem.

\subsection{Interpretation and connection with generalized lasso}\label{sec_plq_gen_lasso}

Employing  $\hbound_{\textsc{pq}}(r\mid\zeta)$ in \eqref{eq:glob_bound} results in an interpretable tangent minorizer \smash{$\ellbound_{\textsc{pq}}(\bbeta \mid \widetilde{\bbeta})$} for~the~logistic log-likelihood. In particular, collecting in the constant $c$ all terms not depending on $\bbeta$,~yields
\begin{equation}\label{eq_plq_bound_beta_intercept}
\begin{split}
\ell(\bbeta)\geq \ellbound_{\textsc{pq}}(\bbeta \mid \widetilde{\bbeta}) 
&=- 0.5\cdot(\by_{\textsc{pq}} -\bX \bbeta)^\top\bW_{\textsc{pq}}(\by_{\textsc{pq}} -\bX \bbeta)-\lVert \bN_{\textsc{pq}} \bX \bbeta \rVert_1 + c,
\end{split}
\end{equation}
where \smash{$\bW_{\textsc{pq}} = \text{diag}\big(\{ w_{\textsc{pq}}( \bx_i^\top \widetilde{\bbeta} ) \}_{i=1}^n \big)$}, $\bN_{\textsc{pq}}= \text{diag}\big(\{ \nu_{\textsc{pq}}( \bx_i^\top \widetilde{\bbeta} ) \}_{i=1}^n \big)$, while ${\by}_{\textsc{pq}} =\bW^{-1}_{\textsc{pq}} (\by-0.5 \cdot {\bf 1}_n)$. Hence, the  loss \smash{$- \ellbound_{\textsc{pq}}(\bbeta \mid \widetilde{\bbeta})$} coincides with that of weighted least squares under the generalized lasso penalty $\lVert \bD \bbeta \rVert_1$, where \smash{$\bD = \bN_{\textsc{pq}} \bX$}. Recalling  \citet{gen_lasso_solution_path_2011}, such a penalty introduces a regularization on  linear combinations $ \bD \bbeta$, rather than on $\bbeta$. In~our case, $\bD = \bN_{\textsc{pq}} \bX$ which essentially enforces a penalization on those $\bbeta$ yielding large values of $\bX \bbeta$. This constraint~is strengthened by the monotonicity of the multiplicative terms \smash{$\nu_{\textsc{pq}}(\bx_i^\top \widetilde{\bbeta} )$} with respect to \smash{$  \vert \bx_i^\top \widetilde{\bbeta} \vert$}.  

The above connection allows to inherit directly any result on generalized lasso~\citep[][]{gen_lasso_solution_path_2011,gen_lasso_dual_path_2016} for optimizing \smash{$ \ellbound_{\textsc{pq}}(\bbeta \mid \widetilde{\bbeta}) $} within, e.g.,~\textsc{mm} and \textsc{vb}. 

\vspace{5pt}

\section{Applicability of the novel PQ bound and empirical assessments}
\vspace{-1pt}
\label{sec_Penalized_Log_Reg}
Recalling Section~\ref{sec_intro_1},  tangent minorizers have broad potential in both frequentist and Bayesian statistics. Here we showcase the applicability of the three bounds under analysis and the empirical improvements achieved by the proposed \textsc{pq} minorizer in the context of penalized maximum~likelihood estimation and variational Bayes approximation, with focus on spatial logistic regression. 

\subsection{Penalized maximum likelihood estimation}
\label{pen_li}
\vspace{-2pt}
Penalized maximum likelihood estimation solves $\argmax_{\bbeta}[\ell(\bbeta) - P_\lambda(\bbeta)]$ under a preselected penalty $P_\lambda(\bbeta)$. For the sake of generality,  we consider the generalized elastic-net penalty 
$P_\lambda(\bbeta) = \lambda[\alpha \lVert \bD\bbeta \rVert_1 {+} 0.5{\cdot}(1-\alpha)\lVert \bD\bbeta \rVert_2^2]$, 
where $\lambda>0$ is a shrinkage parameter, $\alpha\in[0{,}1]$ balances the $L_1$ and $L_2$ terms, while $\bD$ defines the linear combinations of $\bbeta$ that are subject to penalization; see e.g., \citet{helwig2025versatile} for an example focused on group elastic-net. In solving $\argmax_{\bbeta}[\ell(\bbeta) - P_\lambda(\bbeta)]$, notice that if $\ell(\bbeta)$ were quadratic such an optimization would reduce to  a standard generalized lasso problem \citep{gen_lasso_solution_path_2011}. Although this reformulation  does not apply directly under the logistic  log-likelihood, it can be readily employed for its tangent minorizers $\ellbound_{\textsc{bl}}$,~$\ellbound_{\textsc{pg}}$,~and $\ellbound_{\textsc{pq}}$ in Sections~\ref{subsec:quadratic_bounds}--\ref{sec_PLQ}, which rely on $L_1$ and $L_2$ terms. Therefore, in logistic regression, the~above optimization can be effectively solved via an \textsc{mm} algorithm  \citep[][]{mm_Hunter2004},  that iteratively approximates  the log-likelihood in  \eqref{log_lik_log} with one of $\ellbound_{\textsc{bl}}$,  $\ellbound_{\textsc{pg}}$, or $\ellbound_{\textsc{pq}}$ (tangent to it at the most recent estimate of $\bbeta$), and then maximizes the selected minorizer via standard generalized lasso updates \citep{gen_lasso_solution_path_2011}. Unlike Newton-Raphson, \textsc{mm} is monotone in the objective function. This ensures numerical stability and global convergence \citep{mm_Wu2010}.

In the \textsc{mm} context, tighter bounds require fewer iterations to reach the optimum \citep[e.g.,][]{em_McLachlan1996}. As illustrated empirically in Table~\ref{table_genlasso_summary}, this translates into computational gains for the proposed \textsc{pq} minorizer over both \textsc{pg} and \textsc{bl}. See the Supplementary Material for details on the~\textsc{mm} algorithms and the associated computational costs, which further clarify the settings~where~the \textsc{pq} bound is expected to provide the more remarkable gains over  \textsc{pg} and \textsc{bl}.

\subsection{Variational Bayes approximation}
\label{varb}
\vspace{-1pt}

Variational inference approximates the intractable posterior $p(\bbeta \mid \by) \propto p(\bbeta)\prod_{i=1}^{n} p(y_i \mid \bbeta{,} \bx_i)$ by a simpler density $q(\bbeta)$, chosen to minimize the Kullback–Leibler divergence  $\textsc{kl}[q(\bbeta) \lVert p(\bbeta \mid \by)]$, within a tractable approximating family $\mathcal{Q}$ \citep[see, e.g.,][]{Ormerod_2010_VB, Blei_2017}. When $\prod_{i=1}^{n} p(y_i \mid \bbeta, \bx_i)$ is the likelihood of a logistic regression, routine implementations rely on zero-mean Gaussian priors with fixed covariance matrix $\bOmega_0$, i.e.,  $p(\bbeta) = \phi(\bbeta;\bOmega_0)$, and consider multivariate normal variational families, namely,  $\mathcal{Q}=\{q(\bbeta): q(\bbeta) = \phi(\bbeta-\bmu;\bOmega) \}$ \citep[e.g.,][]{Jaakkola_Jordan_2000,Durante_Rigon_2019}. Under these settings, variational inference reduces to finding those $\bmu$ and $\bOmega$ that minimize $\textsc{kl}[ \phi(\bbeta-\bmu;\bOmega) \lVert p(\bbeta \mid \by)]$, or, alternatively, maximize the $\textsc{elbo}[\phi(\bbeta-\bmu;\bOmega)]=\mathbb{E}_{\phi(\bbeta-\bmu;\bOmega)}[ \log \phi(\bbeta;\bOmega_0)+ \sum_{i=1}^n \log p(y_i \mid \bbeta, \bx_i) - \log \phi(\bbeta-\bmu;\bOmega) ]$ \citep[e.g.,][]{Blei_2017,Durante_Rigon_2019}, where \smash{$ \log p(y_i \mid \bbeta, \bx_i)=(y_i-0.5) \bx_i^\top \bbeta + h(\bx_i^\top \bbeta)$},~as defined below equation  \eqref{log_lik_log}. Similarly to penalized maximum likelihood in Section~\ref{pen_li}, in this setting the non-quadratic terms $h(\bx_i^\top \bbeta)$~in~the log-likelihood hinder tractable maximization of the \textsc{elbo}. Such an issue can be addressed by replacing each $h(\bx_i^\top \bbeta)$~with a simpler tangent minorizer $\hbound(\bx_i^\top\bbeta \mid \zeta_i)$, where $\hbound$ can be, e.g., one of \smash{$\hbound_{\textsc{bl}}$, $\hbound_{\textsc{pg}}$, or $\hbound_{\textsc{pq}}$}~studied~in~Sections~\ref{subsec:quadratic_bounds}--\ref{sec_PLQ}.~As detailed in the Supplementary Material, this yields \textsc{vb} schemes employing the  \textsc{mm} principle~to~iteratively~optimize analytic~minorizers of the $\textsc{elbo}$ with respect to $(\bmu,\bOmega)$ and $\{\zeta_i\}_{i=1}^n$,~via~tractable~updates~similar, in terms of cost and simplicity,~to~those~derived~by~\citet{Jaakkola_Jordan_2000}~for~\textsc{pg}.

Within \textsc{vb}, tighter bounds for the \textsc{elbo}  guarantee  improved posterior approximation \citep[][]{Ormerod_2010_VB, Blei_2017}.
This is consistent with Figure~\ref{figure_tvd_fields}, where the \textsc{pq}~minorizer yields more accurate approximations than  \textsc{bl} and~\textsc{pg}, without increasing the computational costs. This motivates extensive use of \textsc{pq} in variational~Bayes as an improved alternative to \textsc{bl} and~\textsc{pg}.

\subsection{Empirical results in a criminology application}\label{sec_application}
To illustrate the practical gains of the \textsc{pq} bound under the methods presented in Sections~\ref{pen_li}--\ref{varb}, we analyze motor-vehicle theft data from Portland (Oregon), shared in 2015 by the \textsc{usa}~National Institute of Justice.  The dataset consists of $n=704$ spatial locations in the city, each associated with a binary response indicating whether it belongs to a high risk zone based on the number of thefts recorded. To infer a spatial map for the probability of being~in~a~high risk zone at any given location in the city (beyond already observed ones), we~employ~a~spatial logistic regression with {\em finite element} bases \citep[e.g.,][]{Lindgren_2011_SPDE,  Sangalli_2013_PDE} placed on a fine grid of the Portland map.  This yields $p=3103$ predictors, whose effects can be regularized~via~the~penalty $P_\lambda(\bbeta)$ in  Section~\ref{pen_li} or the prior  $\bOmega_0$ in Section~\ref{varb}, to encourage smoothness in the estimated spatial field \citep[e.g.,][]{Sangalli_2013_PDE}. In particular, we define~$\bD$~in~Section~\ref{pen_li}~to~enforce~Laplacian regularization, and, coherently, we let $\bOmega_0^{-1}= \lambda \bD^\top \bD$ under the Bayesian~model within Section~\ref{varb} (with \smash{$\lambda=10^{-5}$} to induce a smooth solution). See the Supplementary Material~for details.

\renewcommand{\arraystretch}{1}

\begin{table}[b]
	\caption{Performance of the \textsc{mm} schemes  for penalized maximum likelihood estimation under each of  the three~minorizers: number of iterations to reach convergence, total runtime in seconds, and time gain of  \textsc{pq} over \textsc{bl} and \textsc{pg}. The results are reported for two implementations: solution path only for $\lambda$, and solution path  for ($\lambda, \alpha$). 
		\vspace{10pt}}{
	\begin{tabular}{ccccccc}
		\quad & \multicolumn{3}{c}{Solution path for $\lambda$ ($\alpha=0.8$)} \qquad & \multicolumn{3}{c}{Solution path for ($\lambda$,$\alpha$)} \\
& Iterations & Total runtime [s] & Time gain [\%] &  \quad Iterations & Total runtime [s] & Time gain [\%] \\
		\textsc{bl}  \qquad  \qquad & 1731 & 111.87 & 57.68 &  \quad 10421 & 733.74 & 54.74 \\
		\textsc{pg}  \qquad  \qquad  & 1036 &  66.50 & 28.81 &  \quad 6468 & 465.08 & 28.59 \\
		\textsc{pq}  \qquad  \qquad  &  752 &  47.34 &       &   \quad 4513 & 332.10 &       \\
	\end{tabular}}
	\label{table_genlasso_summary}
\end{table}

\renewcommand{\arraystretch}{1}

Table~\ref{table_genlasso_summary} summarizes the computational performance of the \textsc{mm} schemes for penalized maximum likelihood estimation of the parameters in the above model, under the three minorizers analyzed (see Section~\ref{pen_li}). Results refer to realistic implementations \citep[][]{friedman2010regularization} exploring the entire solution path at different values for the tuning parameters. For $\lambda$ we consider an equally-spaced grid ranging from $-6$ to $+1$ on the $\log_{10}(\lambda)$ scale, while $\alpha$~is~either~fixed~at~$0.8$, or is also allowed to vary from $0.05$ to $0.95$ with $0.15$ increments. Following routine implementations, the solution paths are obtained by initializing $\bbeta$ at $\bzero_p$ and then proceeding backward~from the highest penalty value via efficient warm-start procedures. Consistent with the discussion in Section \ref{pen_li}, Table~\ref{table_genlasso_summary} confirms that the improved tightness of the \textsc{pq} bound yields a noticeable reduction~in~the number of iterations to convergence, at comparable per-iteration cost (since~all three \textsc{mm}s have to deal with the generalized lasso term from $P_\lambda(\bbeta)$ in the maximization step).~This~translates~into~remarkable gains in the total runtimes, both in absolute and relative~terms.

\begin{figure}[t!]
	\centering
	\includegraphics[width=1\textwidth]{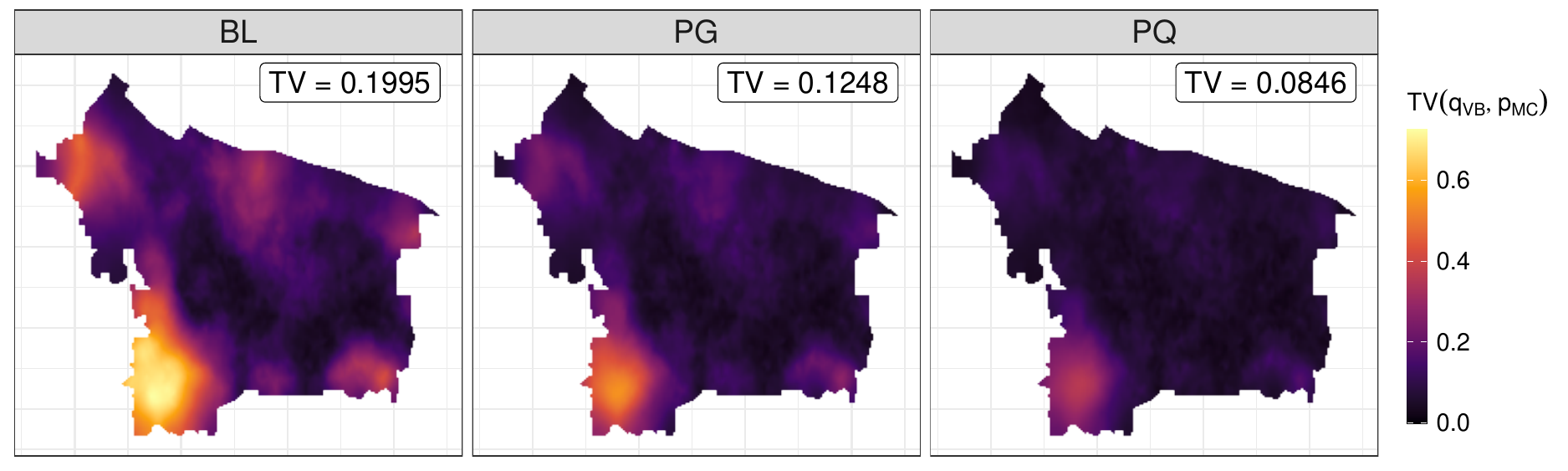}
		\vspace{-10pt}
	\caption{
		\textsc{vb} accuracy under the three bounds analyzed: \textsc{tv} distance between the actual posterior~on~the~spatial effects ${\bx^{\top}}\bbeta$ (for several configurations of $\bx$ covering a fine grid of the city map) and the corresponding   \textsc{vb} approximation obtained under the \textsc{bl},  \textsc{pg} and  \textsc{pq} minorizers. The posterior is estimated via Monte Carlo leveraging the Gibbs sampler of  \cite{Polson2012}. The top-right values within each panel correspond to the average of \textsc{tv} distances over the spatial~grid.}
	\label{figure_tvd_fields}
\end{figure} 

As shown in Figure~\ref{figure_tvd_fields}, the  \textsc{pq} tightness yields not only computational gains under \textsc{mm}, but~also accuracy improvements when employed within \textsc{vb} to approximate the target posterior~of~the~spatial effects over a fine grid of Portland's map (see Section~\ref{varb}). In particular, at each location we obtain a total variation (\textsc{tv}) distance between the approximate posterior derived under the~\textsc{pq} bound and the target one that is pointwise lower than those associated with \textsc{bl}~and~\textsc{pg}.~According to Figure~\ref{figure_tvd_fields}, these systematic gains are also evident in absolute terms (recall that $\textsc{tv} \in [0,1]$).

\vspace{55pt}
{\bf Acknowledgements.}  Niccol\'o Anceschi was partially supported by the United States National Institutes of Health (5R01ES027498-05, 5R01AI167850-03). Giacomo Zanella was funded by the European Union (ERC, PrSc-HDBayLe, project number 101076564). Cristian Castiglione, Tommaso Rigon and Daniele Durante have been funded by the European Union~(ERC,~NEMESIS, project number: 101116718). Views and opinions expressed are however those of the authors only and do not necessarily reflect those~of the European Union or the European Research Council Executive Agency.

\clearpage

\vspace{20pt}
\begin{center}
{\bf \huge Supplementary Material}
\end{center}

\vspace{45pt}

\appendix

\numberwithin{equation}{section}
\numberwithin{figure}{section}
\numberwithin{table}{section}
\newtheorem{lemma}{Lemma}[section]

\section{P\'{o}lya-Gamma data augmentation and PG bound}
\label{subsec_PG_DA}

The P\'{o}lya-Gamma data augmentation \citep{Polson2012} expands the logistic likelihood via a careful scale-mixture representation, introducing a P\'{o}lya-Gamma latent variable $z_i \in (0,\infty)$ for each statistical unit $i=1,\dots,n$, such that \smash{$(z_i \mid \bbeta) \simind \textsc{PG}(1,\bx_i^\top \bbeta)$}. In this way, conjugacy~between the Gaussian prior for $\bbeta$  and the augmented likelihood for $(\by,\bz \mid \bbeta)$ can be restored,~thereby yielding to a  Gaussian full-conditional distribution for $( \bbeta \mid \by,\bz)$ that  facilitates the implementation of a tractable Gibbs sampling scheme. More recently, \citet{Durante_Rigon_2019} leveraged on the same hierarchical representation to construct a mean-field variational approximation of the joint posterior $p(\bbeta, \bz \mid \by)$.
In doing so, the authors proved that, although originally developed by \citet{Jaakkola_Jordan_2000} under a purely mathematical argument, the \textsc{pg} bound for the logistic log-likelihood, i.e., \smash{$\ellbound_{\textsc{pg}} (\bbeta \mid \widetilde{\bbeta})=\sum_{i=1}^n [ (y_i -0.5) \bx_i^\top \bbeta + \hbound_{\textsc{pg}}(\bx_i^\top \bbeta\mid \bx_i^\top \widetilde{\bbeta}) ]$}, can be equivalently re-expressed as
\begin{align*}
 &   \ellbound_{\textsc{pg}} (\bbeta \mid \widetilde{\bbeta})  = \mathbb{E}_{p(\bz \mid \widetilde{\bbeta})} \left[ \log \frac{ p(\by, \bz \mid \bbeta)}{ p(\bz \mid \widetilde{\bbeta})} \right] = \sum\nolimits_{i=1}^n \mathbb{E}_{p(z_i \mid \widetilde{\bbeta})} \left[ \log \frac{ p(y_i, z_i \mid \bbeta)}{ p(z_i \mid \widetilde{\bbeta})} \right] \\
    &= \sum\nolimits_{i=1}^n [  (y_i -1/2)\bx_i^\top \bbeta-0.5\bx_i^\top \widetilde{\bbeta}  - 0.5\cdot w_{\textsc{pg}}(\bx_i^\top \widetilde{\bbeta}) \big( (\bx_i^\top \bbeta)^2 -(\bx_i^\top \widetilde{\bbeta})^2 \big)-\log(1+\exp(-\bx_i^\top \widetilde{\bbeta} ))] \; .
\end{align*}
Indeed $w_{\textsc{pg}}(\bx_i^\top \widetilde{\bbeta}) = \tanh(\bx_i^\top \widetilde{\bbeta} /2 ) / (2 \; \bx_i^\top \widetilde{\bbeta})$
coincides with the expected value of the $\mbox{PG}(1,\bx_i^\top \widetilde{\bbeta}) $ random variable.

\vspace{10pt}

\section{Previously proposed piece-wise quadratic bounds}
\label{subsec_PLQ_bound}

\setcounter{figure}{0}
\setcounter{table}{0}
\setcounter{equation}{0}

The key aspect motivating interest in  tangent quadratic lower bounds for logistic log-likelihoods is the associated high tractability, both in the derivation of the bound itself and in its direct~maximization. However, such a tractability might come at the expense of a limited approximation~accuracy. 
As discussed in the main article, this has motivated several subsequent contributions~in the literature aimed at deriving more accurate lower bounds for logistic log-likelihoods, with a specific focus on piece-wise quadratic minorizers \citep[see, e.g.,][]{Marlin_ICML2011_Piecewise_Q_bounds,khan12_multinomial,Ermis_Bouchard_2014}.

Within the above framework, a seminal contribution is the one by \citet{Marlin_ICML2011_Piecewise_Q_bounds}~who proposed the use of a fixed minimax-optimal piece-wise quadratic bound among all the possible piece-wise quadratic (\textsc{r-pq}) tangent minorizers of the logistic log-likelihood defined as
\begin{equation*}
    h_{\textsc{r-pq}}(r ; R) = \sum\nolimits_{s=1}^R (a_s + b_s r +
  c_s r^2) \cdot \bbone(r \in [t_{s-1},t_s)).
\end{equation*}
The authors consider the number of disjoint intervals $R$ composing the domain of the minorizing  function to be a principal tunable parameter, which regulates a trade-off between the accuracy and the complexity of the resulting approximation.
For an arbitrary number $R$ of intervals, the piece-wise quadratic bound is then constructed by solving numerically a minimax optimization problem both on the locations identifying the interval's separation and on the local coefficients of the quadratic contributions. 

Specifically, adapting the notation to our setting,  the minimax solution~is~obtained by solving
\begin{equation*}
\begin{split}
&\min\nolimits_{\{a_s,b_s,c_s,t_s\}}  \max\nolimits_{s=1,\dots,R} \max\nolimits_{r \in [t_{s-1},t_s)} \; \left[ h(r) - h_{\textsc{r-pq}}(r ; R) \right] \\
& \qquad    \begin{cases}
\;\; h(r) - ( a_s + b_s r +
  c_s r^2) \geq 0 &\quad \forall s=1,\dots,R, \; \forall r \in [t_{s-1},t_s), \\[3pt]
\;\; t_s - t_{s-1} > 0 &\quad \forall s=1,\dots,R, \\[3pt]
\;\; c_s \leq 0 &\quad \forall s=1,\dots,R,
\end{cases}
\end{split}
\end{equation*}
while further imposing bounded discrepancy from the target in each of the $R$ sets. The output of this numeric optimization was originally exploited within a generalized \textsc{em} algorithm to overcome the intractability of some logistic-Gaussian integrals, replacing the logistic log-likelihoods with fixed piece-wise quadratic bounds. Therefore, in this case, the construction of~the~fixed~bound~is separated from the learning phase of the inferential procedure. More specifically, such a bound is treated as a pre-computed approximation of an analytically intractable component of the model, whose accuracy is controlled via the cardinality of the underlying partitioning~of~the~domain.

Although the \textsc{pq} bound we propose in Section~\ref{sec_PLQ} is implicitly included in the general family of piece-wise quadratic tangent lower bounds for the logistic log-likelihood, such a novel minorizer substantially improves the one of \citet{Marlin_ICML2011_Piecewise_Q_bounds} in terms of tractability. First, \textsc{pq}  does~not requires solving an internal minimax optimization problem, but rather is available through an explicit analytical formulation which entails only a single splitting point for the domain of each likelihood contribution. While this avoids the need to learn the locations of the knots, we further restrict the degree of freedom by imposing the same curvature on both the resulting quadratic branches. In doing so, we overcome the need of imposing bounded discrepancy from the target, since the increased flexibility of the \textsc{pq} bound already provides a substantial accuracy gain~over the purely-quadratic minorizers by \citet{Bohning1988} and \citet{Jaakkola_Jordan_2000}. Finally, as illustrated in the main article, the quantities $\bzeta = (\zeta_1,\dots,\zeta_n)^\top$, which parameterize~the bound and its coefficients, can be learned adaptively as part of the inferential procedure instead of being pre-determined via a minimax optimization routine. Such an adaptive learning is inherent to state-of-the-art routines employing tangent minorizers (including quadratic ones), such as \textsc{em} and \textsc{mm} for maximum likelihood estimation \citep[e.g.,][]{mm_Hunter2004, mm_Wu2010}, and \textsc{cavi} for mean-field variational inference \citep[e.g.,][]{Jaakkola_Jordan_2000,bishop2006pattern}. 

\vspace{3pt}

\section{Technical lemmas and proofs}\label{app:proofs}

\subsection{Technical lemma}
Before proceeding with the proofs of the main results in the article, let us state and prove~a~technical lemma which is useful for proving Proposition \ref{prop:PLQ_dom_PG}.

\begin{lemma}\label{lemma:third_der}
Calling $\mathbb{R}_+:=[0,\infty)$, let $f:\mathbb{R}_+\to\mathbb{R}_+$ be a $C^1$, strictly concave function with $f(0)<0$, $f(\zeta)=0$ and
\smash{$
\int_{0}^\zeta f(s)ds=0
$}
for some $\zeta>0$.
Then
\smash{$
\int_{0}^r f(s)ds\leq 0
$}
for all $s\in\mathbb{R}_+$.
\end{lemma}
\begin{proof}[Proof of Lemma \ref{lemma:third_der}]
By concavity of $f$, we have
$$
0
=
\int_{0}^\zeta f(r)dr
<
\int_{0}^\zeta f(\zeta)+f'(\zeta)(r-\zeta)dr
=
-f'(\zeta)\zeta^2/2\,,
$$
which implies $f'(\zeta)<0$. 
Thus, again by concavity of $f$, we obtain
\begin{align*}
\int_{\zeta}^r f(s)ds
<
&
\int_{\zeta}^r
f(\zeta)+f'(\zeta)(s-\zeta)ds
=
f'(\zeta)(r-\zeta)^2/2<0,
\end{align*}
for all $r> \zeta$. Consider now $r\leq \zeta$.
Since $f$ is strictly concave it has at most two zeros, one of which is at $\zeta$.
Since $f(\zeta)=0$ and $f'(\zeta)<0$, then $f$ is positive in a left neighborhood of $\zeta$.
Thus, by $f(0)<0$ and the continuity of $f$,  we have that $f$ has a second zero in $(0,\zeta)$, which we denote as $r_0$.
Thus, the function $g(r):=\int_{0}^r f(s)ds$ is strictly decreasing in $(0,r_0)$, strictly increasing in $(r_0,\zeta)$ and satisfies $g(0)=g(\zeta)=0$, which imply that $g(r)<0$ for $r\in(0,\zeta)$.
\end{proof}

\subsection{Proofs}
\begin{proof}[Proof of Lemma \ref{Lemma_optimality_pg_1D_space}]
Considering any element $\hbound_{\textsc{q}} \in \mathcal{H}_{\textsc{q}}$, the tangency conditions on $ \hbound_{\textsc{q}} $ imply
\begin{equation}
\label{eq_proof_lemma1}
\begin{split}
    h(\zeta) &= \hbound_{\textsc{q}}(\zeta \mid \zeta) = a(\zeta) + b(\zeta)\zeta + c(\zeta)\zeta^2, \\
    h'(\zeta) &= \hbound_{\textsc{q}}'(\zeta \mid \zeta) = b(\zeta) + 2 c(\zeta)\zeta,
\end{split}
\end{equation}
for any $\zeta \in \mathbb{R}$. Here we  have exploited the differentiability of both the target function and the quadratic minorizers, where $ h'(r) = \partial h(r) / \partial r$.
Additionally, the symmetry of the former gives
\begin{align*}
    h(\zeta) &= h(-\zeta) \geq \hbound_{\textsc{q}}(-\zeta \mid \zeta) = a(\zeta) - b(\zeta)\zeta + c(\zeta)\zeta^2 = h(\zeta) - 2 b(\zeta)\zeta, 
\end{align*}
which implies $b(\zeta)\zeta \geq 0$.
After multiplying by $\zeta$ both sides in the second line of \eqref{eq_proof_lemma1}, the above inequality gives
$2 c(\zeta) \leq h'(\zeta)/\zeta = - w_{\textsc{pg}}(\zeta)$.
Eliciting the (negative) curvature of 
$\hbound_{\textsc{q}}(\,\bigcdot \mid \zeta)$ as $ w_{\textsc{q}}(\zeta) = - 2 c(\zeta)$, one equivalently has $w_{\textsc{q}}(\zeta) \geq w_{\textsc{pg}}(\zeta)$.
To conclude the proof, it is sufficient to recall that the tangent minorization conditions always constrain the constant and linear term of any purely quadratic bound. 
Accordingly, any element of $\mathcal{H}_{\textsc{q}}$ can be rewritten as 
\begin{align*}
    \hbound_{\textsc{q}}(r \mid \zeta) &= h(\zeta) + h'(\zeta) (r-\zeta) - \frac{1}{2} w_{\textsc{q}}(\zeta) (r-\zeta)^2 ,
\end{align*}
including $\hbound_{\textsc{pg}}$.
This implies that $\hbound_{\textsc{q}}(r \mid \zeta) - \hbound_{\textsc{pg}}(r \mid \zeta) = \frac{1}{2} \big(w_{\textsc{pg}}(\zeta) - w_{\textsc{q}}(\zeta) \big) (r-\zeta)^2 \leq 0$. The point-wise optimality of $\hbound_{\textsc{
pg}}$ over any alternative quadratic bound is  illustrated in Figure~\ref{figure_quadratic_lb}.
\end{proof}

\begin{figure}[t]
\centering
    \includegraphics[width=1\textwidth]{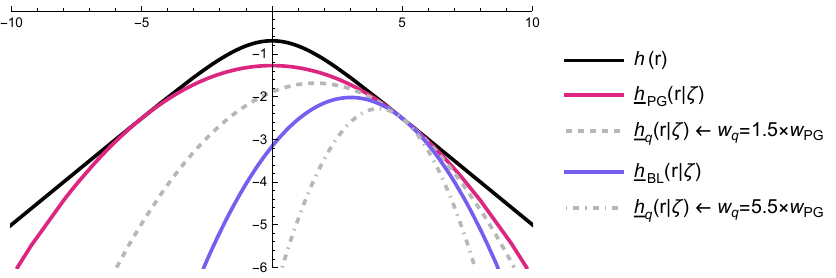}
    \caption{Comparison between different quadratic bounds for $
    h(r)$, tangent to the latter in $\zeta=5$. $\hbound_{\textsc{pg}}(r \mid \zeta)$ and $\hbound_{\textsc{bl}}(r \mid \zeta)$ are described in Section~\ref{subsec:quadratic_bounds} of the main article, while the two quadratic lower bounds $\hbound_{\textsc{q}}(r \mid \zeta)$ corresponds respectively to $w_{\textsc{q}}(\zeta) = 1.5 \cdot w_{\textsc{pg}}(\zeta)$ and $w_{\textsc{q}}(\zeta) = 5.5 \cdot w_{\textsc{pg}}(\zeta)$.}
    \label{figure_quadratic_lb}
\end{figure}

\vspace{5pt}
\begin{proof}[Proof of Corollary \ref{Corollary_optimality_pg_beta_space}]
First notice that by replacing each $\hbound(\bx_i^\top \bbeta\mid  \bx_i^\top \widetilde{\bbeta})$ in \eqref{eq:glob_bound} with any bound $\hbound_{\textsc{q},i} \in \mathcal{H}_{\textsc{q}}$ yields~a~minorizer $\ellbound_{\textsc{q}}$ for $\ell$ that belongs to $\mathcal{G}_{\textsc{q}}$. Therefore, also  $\ellbound_{\textsc{pg}} \in \mathcal{G}_{\textsc{q}}$ provided that $\hbound_{\textsc{pg}} \in \mathcal{H}_{\textsc{q}}$. Since \smash{$\sum\nolimits_{i=1}^n (y_i -0.5) \bx_i^\top \bbeta $} is in common to all bounds in \smash{$\mathcal{G}_{\textsc{q}}$}, to prove the optimality of \textsc{pg} within \smash{$\mathcal{G}_{\textsc{q}}$}, it suffices to compare the generic \smash{$\sum\nolimits_{i=1}^n \hbound_{\textsc{q},i}(\bx_i^\top \bbeta\mid  \bx_i^\top \widetilde{\bbeta})$} with  \smash{$\sum\nolimits_{i=1}^n \hbound_{\textsc{pg}}(\bx_i^\top \bbeta\mid  \bx_i^\top \widetilde{\bbeta})$}. By contradiction, assume there is an \smash{$\widehat{\ellbound}_{\textsc{q}} \in \mathcal{G}_{\textsc{q}}$} and \smash{$\widehat{\bbeta}\in\mathbb{R}^p$} such that there exists a $\bbeta^*$ for which \smash{$\ell(\bbeta^*) \geq \widehat{\ellbound}_{\textsc{q}}(\bbeta^* \mid \widehat{\bbeta}) > \ellbound_{\textsc{pg}}(\bbeta^* \mid \widehat{\bbeta})$}.
This would mean that there exists at least one $i^* \in \{1,\dots,n \}$ such that $h(\bx_{i^*}^\top \bbeta^*) \geq \hbound_{\textsc{q},i^*}(\bx_{i^*}^\top \bbeta^* \mid \bx_{i^*}^\top \widehat{\bbeta}) > \hbound_{\textsc{pg}}(\bx_{i^*}^\top \bbeta^* \mid \bx_{i^*}^\top \widehat{\bbeta})$. The latter relation is not possible by Lemma \ref{Lemma_optimality_pg_1D_space}, thereby proving Corollary \ref{Corollary_optimality_pg_beta_space}.
\end{proof}

\vspace{5pt}
\begin{proof}[Proof of Proposition \ref{prop:PLQ_dom_PG}]
Let us start by proving $h(r) \geq \hbound_{\textsc{pq}}(r \mid \zeta)$, $\forall \; r, \zeta \in \mathbb{R}$.
For $\zeta=0$,~the $\textsc{pq}$ and \textsc{pg} bounds coincide, thus the proposition is clearly satisfied. In addition, notice that by the symmetry of the target function and of the newly-proposed  \textsc{pq} bound (see the right~panel~of~Figure~\ref{figure_plq_r2} in the main article)  it suffices to focus on $\zeta>0$ and $r>0$.

For ease of exposition let us consider a translated version
\begin{equation*}
\begin{aligned}
    \widehat{h}(r) &=h(r) - h(0) = - \log \cosh (r/2), \\ \widehat{\hbound}_{\textsc{pq}}(r \mid \zeta)&= \hbound_{\textsc{pq}}(r \mid \zeta) - h(0), 
\end{aligned}
\end{equation*} 
of the target function and of its minorizer, so that $\widehat{h}(0)=0$. Under these settings, $\widehat{\hbound}_{\textsc{pq}}(r \mid \zeta)$ is a polynomial of order two in  $r>0$, such that
\begin{equation*}
 \widehat{\hbound}_{\textsc{pq}}(0)=0, \quad \widehat{\hbound}_{\textsc{pq}}(\zeta)=h(\zeta), \quad \widehat{\hbound'}_{\textsc{pq}}(0)<0,\quad \widehat{\hbound'}_{\textsc{pq}}(\zeta)=\widehat{h}'(\zeta).
\end{equation*}
Moreover, $\widehat{h}'(0)=0$ and  $\widehat{h}'''(r) = 1/4 \operatorname{sech}^2(r/2) \operatorname{tanh}(r/2) >0$ for all $r>0$, which~crucially~implies that \smash{$\widehat{h}$} (and hence $h$) is strictly concave. Leveraging these results, our goal is to prove that \smash{$\widehat{\hbound}_{\textsc{pq}}(r \mid \zeta) - \widehat{h}(r) < 0$} for all $r>0$. To this end, write
\begin{equation*}
\widehat{\hbound}_{\textsc{pq}}(r \mid \zeta) - \widehat{h}(r)=\int_0^r f(s)ds,    
\end{equation*}
with $f(s)=\widehat{\hbound'}_{\textsc{pq}}(s  \mid \zeta) - \widehat{h}'(s)$. Then, the desired result holds  by Lemma \ref{lemma:third_der}, whose assumptions are satisfied because \smash{$f''(s)=-\widehat{h}'''(s)<0$} for every $s>0$. As discussed above, by symmetry, the same inequality is verified for $r<0$, thereby proving $h(r) \geq \hbound_{\textsc{pq}}(r \mid \zeta)$, $\forall \; r, \zeta \in \mathbb{R}$.

To conclude the proof of Proposition \ref{prop:PLQ_dom_PG}, let us now turn the attention to proving the inequality $\hbound_{\textsc{pq}}(r \mid \zeta) \geq \hbound_{\textsc{pg}}(r \mid \zeta)$, $\forall \; r, \zeta \in \mathbb{R}$. To this end, we leverage the inequality \smash{${\lvert r \rvert \leq \frac{1}{2} \big( r^2/\lvert \zeta \rvert + \lvert \zeta \rvert \big)}$}, often employed in the \textsc{mm} literature \citep{mm_Hunter2004,mm_Wu2010}.
To exploit this result, first recall that the tangency condition for the minorizer in the transformed space requires that the quantities
\begin{align*}
    \tilde{\hbound}_{\textsc{pg}}'(\varphi \mid \varphi) = -\frac{1}{2} \widetilde{w}_{\textsc{pg}}(\varphi)
    \quad \mbox{and} \quad
    \tilde{\hbound}_{\textsc{pq}}'(\varphi \mid \varphi) = -\frac{1}{2}\widetilde{w}_{\textsc{pq}}(\varphi) -\frac{1}{2} \frac{1}{\sqrt{\varphi}} \widetilde{\nu}_{\textsc{pq}}(\varphi),
\end{align*}
are both equal to $\tilde{h}'(\varphi)$. 
In the original space, this implies $ w_{\textsc{pg}}(\zeta) = w_{\textsc{pq}}(\zeta) + \nu_{\textsc{pq}}(\zeta) / \lvert \zeta \rvert $. 

As such
\begin{align*}
    h(r) \geq 
    \hbound_{\textsc{pq}}(r \mid \zeta)
   &= h(\zeta) - \frac{1}{2} w_{\textsc{pq}}(\zeta) (r^2-\zeta^2) - \nu_{\textsc{pq}}(\zeta)(\lvert r \rvert- \lvert \zeta \rvert)\\
   &  \geq h(\zeta) - \frac{1}{2} w_{\textsc{pq}}(\zeta) (r^2-\zeta^2) - \frac{1}{2} \nu_{\textsc{pq}}(\zeta) \frac{1}{\lvert \zeta \rvert }( r^2 - \zeta^2)\\
   & = h(\zeta) - \frac{1}{2} w_{\textsc{pg}}(\zeta)( r^2 - \zeta^2) =  \hbound_{\textsc{pg}}(r \mid \zeta) \;.
\end{align*}
The second line is obtained by applying the previously-mentioned inequality to the absolute value function. The third line is a consequence of $ w_{\textsc{pg}}(\zeta) = w_{\textsc{pq}}(\zeta) + \nu_{\textsc{pq}}(\zeta) / \lvert \zeta \rvert $.
\end{proof}

\vspace{16pt}
\begin{proof}[Proof of Lemma \ref{Lemma_optimality_plq}]
Without loss of generality, let us rewrite here every element $\hbound_{\textsc{s}} \in \mathcal{H}_{\textsc{s}}$ as
\begin{align*}
  \hbound_{\textsc{s}}(r \mid \zeta) = h(\zeta) + b_{\textsc{s}}(\zeta) (r-\zeta) - \frac{1}{2} w_{\textsc{s}}(\zeta) (r^2-\zeta^2) - \nu_{\textsc{s}}(\zeta)(\lvert r \rvert - \lvert \zeta \rvert ) \; , 
\end{align*}
where $w_{\textsc{s}}(\zeta) =- 2 c_{\textsc{s}}(\zeta)$, $\nu_{\textsc{s}}(\zeta) = - d_{\textsc{s}}(\zeta)$, and the intercept is fixed by condition $\hbound_{\textsc{s}}(\zeta \mid \zeta)=h(\zeta)$.

The above formula facilitates comparison with the formulation of the \textsc{pq} bound  in \eqref{plq_original_space}.
For ease of exposition, we additionally consider again the translated version \smash{$\widehat{h}(r)$} of the target function~$h(r)$ 
\begin{equation*}
\begin{aligned}
    \widehat{h}(r) &= h(r) - h(0) = - \log \cosh (r/2) \; ,
\end{aligned}
\end{equation*}
so that $\widehat{h}(0)=0$, while preserving the symmetry $\widehat{h}(-r) = \widehat{h}(r)$.
Accordingly, any $ \hbound_{\textsc{s}} \in \mathcal{H}_{\textsc{s}}$ results in a proper minorizer for the adjusted target by simply applying the same rigid translation~$\smash{\widehat{\hbound}_{\textsc{s}}}(r \mid \zeta) = \hbound_{\textsc{s}}(r \mid \zeta) -h(0)$, so that
\begin{equation*}
\left\{
\begin{aligned}
        \, & \widehat{h}(r) \geq \widehat{\hbound}_{\textsc{s}}(r \mid \zeta) = \widehat{h}(\zeta) + b_{\textsc{s}}(\zeta) (r-\zeta) - \frac{1}{2} w_{\textsc{s}}(\zeta) (r^2-\zeta^2) - \nu_{\textsc{s}}(\zeta)(\lvert r \rvert - \lvert \zeta \rvert ), \\[3pt]
       \, & \widehat{h}(\zeta) = \widehat{\hbound}_{\textsc{s}}(\zeta \mid \zeta ).
\end{aligned} \right.
\end{equation*}
In particular, the minorization requirement at the specific locations $r=- \zeta$ and $r=0$ implies
\begin{equation*}
\left\{
\begin{aligned}
    \, & \widehat{h}(\zeta) =  \widehat{h}(-\zeta) \geq \widehat{\hbound}_{\textsc{s}}(-\zeta \mid \zeta ) = \widehat{h}(\zeta) - 2 b_{\textsc{s}}(\zeta) \zeta, \\[3pt]
    \, & 0 = \widehat{h}(0) \geq \widehat{\hbound}_{\textsc{s}}(0 \mid \zeta) = \widehat{h}(\zeta) - b_{\textsc{s}}(\zeta) \zeta + \frac{1}{2} w_{\textsc{s}}(\zeta) \zeta^2 + \nu_{\textsc{s}}(\zeta) \lvert \zeta \rvert.
\end{aligned} \right.
\end{equation*}
Assume that $\zeta \neq 0$, so that the minorizers are differentiable at $\zeta$.
As such, the tangent minorization requirement further gives
\begin{equation*}
    \widehat{h}'(\zeta) = \widehat{\hbound}_{\textsc{s}}~\hspace{-5pt}'\hspace{1pt}(\zeta \mid \zeta) = b_{\textsc{s}}(\zeta) - w_{\textsc{s}}(\zeta) \zeta - \nu_{\textsc{s}}(\zeta) \, \sign(\zeta),
\end{equation*}
where $\sign(\cdot)$ is the sign function.

Combining the above result with the previous conditions, we have that
\begin{equation*}
\left\{ 
\begin{aligned}
    \, & b_{\textsc{s}}(\zeta) - \nu_{\textsc{s}}(\zeta) \, \sign(\zeta) = \widehat{h}'(\zeta) + w_{\textsc{s}}(\zeta) \zeta,
    \\[3pt]
    \, & w_{\textsc{s}}(\zeta) \geq \smash{2 [\widehat{h}(\zeta) - \widehat{h}'(\zeta) \, \zeta ]/\zeta^2}, \\[3pt]
    \, & b_{\textsc{s}}(\zeta) \zeta \geq 0.
\end{aligned} \right.
\end{equation*}
Conversely, we recall that the \textsc{pq} bound $\widehat{\hbound}_{\textsc{pq}}(r \mid \zeta)$ arises by imposing both $b_{s}(\zeta) = 0$ --- which implies symmetry with respect to the origin --- and also \smash{$\widehat{h}(0) = \widehat{\hbound}_{\textsc{pq}}(0 \mid \zeta)$}. 

These two restrictions translate into
\begin{equation*}
\begin{split}
    w_{\textsc{pq}}(\zeta) = \frac{2}{\zeta^2} \left[\widehat{h}(\zeta) - \widehat{h}'(\zeta) \, \zeta \right], \qquad \qquad
    \nu_{\textsc{pq}}(\zeta) = \frac{1}{\vert \zeta \vert} \left[ \widehat{h}'(\zeta) \, \zeta - 2 \, \widehat{h}(\zeta) \right],
\end{split}
\end{equation*}
which, in particular, means that $w_{\textsc{pq}}(\zeta) \leq w_{\textsc{s}}(\zeta) $.
Assume now that $\zeta >0$. If $r>0$, then
\begin{equation*}
\begin{split}
    \widehat{\hbound}_{\textsc{pq}}(r \mid \zeta) - \widehat{\hbound}_{\textsc{s}}(r \mid \zeta)
    &= - [\nu_{\textsc{pq}}(\zeta) - \nu_{\textsc{s}}(\zeta) + b_{\textsc{s}}(\zeta)] (r - \zeta) - \frac{1}{2} [w_{\textsc{pq}}(\zeta) - w_{\textsc{s}}(\zeta) ] (r^2 - \zeta^2) \\
    &=  [ w_{\textsc{pq}}(\zeta) - w_{\textsc{s}}(\zeta) ] \zeta (r - \zeta) - \frac{1}{2} [ w_{\textsc{pq}}(\zeta) - w_{\textsc{s}}(\zeta)] (r^2 - \zeta^2) \\
    &= - \frac{1}{2} [w_{\textsc{pq}}(\zeta) - w_{\textsc{s}}(\zeta)] (r - \zeta)^2 \geq 0.
\end{split}
\end{equation*}
On the other hand, if $r<0$ then
\begin{align*}
    \widehat{\hbound}_{\textsc{pq}}(r \mid \zeta) - \widehat{\hbound}_{\textsc{s}}(r \mid \zeta) &= [ \widehat{\hbound}_{\textsc{pq}}(-r \mid \zeta) - \widehat{\hbound}_{\textsc{s}}(-r \mid \zeta)] - 2 \, b_{\textsc{s}}(\zeta) \, r \\
    &= [ \widehat{\hbound}_{\textsc{pq}}( \vert r \vert \mid \zeta) - \widehat{\hbound}_{\textsc{s}}( \vert r \vert \mid \zeta) ]- 2 \, b_{\textsc{s}}(\zeta) \, r \geq 0.
\end{align*}
Indeed, the first term is non-negative thanks to the previous equation, whereas the second one is non-negative because $r<0$ and $b_{\textsc{s}}(\zeta) \geq 0$, since $\zeta >0$.
Finally, for $r=0$ it holds by definition that $\widehat{\hbound}_{\textsc{pq}}(0 \mid \zeta) = \widehat{h}(0) \geq \widehat{\hbound}_{\textsc{s}}(0 \mid \zeta)$, concluding that $\widehat{\hbound}_{\textsc{pq}}(r \mid \zeta) \geq \widehat{\hbound}_{\textsc{s}}(r \mid \zeta)$ for any $r \in \mathbb{R}$. 

Recall that all the above derivations have been obtained for $\zeta>0$. Analogous results can be derived for the case $\zeta < 0$, which we hereby omit to avoid redundancies.
Conversely, for the case $\zeta = 0$ we can focus on the right and left limits of the difference quotient
\begin{equation*}
    \left\{ 
\begin{aligned}
    & \lim_{r \rightarrow 0^+} \frac{\widehat{\hbound}_{\textsc{s}}(r  \mid 0)-\widehat{\hbound}_{\textsc{s}}(0 \mid 0)}{r} = b_{\textsc{s}}(0) - \nu_{\textsc{s}}(0) \\
    &\lim_{r \rightarrow 0^-} \frac{\widehat{\hbound}_{\textsc{s}}(r  \mid 0)-\widehat{\hbound}_{\textsc{s}}(0 \mid 0)}{r} = b_{\textsc{s}}(0) + \nu_{\textsc{s}}(0) 
\end{aligned}
\right.
\end{equation*}
which have  to be both equal to $\widehat{h}'(0)=0$. In particular, this implies that $\nu_{\textsc{s}}(0)=0$, which~amounts to dropping the piece-wise linear term. 
As consequence, one has $\hbound_{\textsc{s}}(\,\bigcdot \mid 0) \in \mathcal{H}_{\textsc{s}} \cap \mathcal{H}_{\textsc{q}}$, ensuring~the optimality of \smash{$\widehat{\hbound}_{\textsc{pq}}$}. 
Recall that $\hbound_{\textsc{pq}}(\,\bigcdot \mid 0) = \hbound_{\textsc{pg}}(\,\bigcdot \mid 0) = \hbound_{\textsc{bl}}(\,\bigcdot \mid 0)$.
\end{proof}

\vspace{15pt}

\section{ Penalized maximum likelihood estimation via tangent minorizers}\label{appendix_optim}

\setcounter{figure}{0}
\setcounter{table}{0}
\setcounter{equation}{0}

The generalized  elastic-net penalty $\lambda [ \alpha \lVert \bD \bbeta \rVert_1+0.5{\cdot}(1-\alpha) \lVert \bD \bbeta \rVert_2^2 ]$  studied in Section~\ref{pen_li}~of~the  article includes, as a special case, ridge \smash{$\lambda \lVert \bbeta \rVert_2^2$}, lasso $\lambda \lVert \bbeta \rVert_1$, generalized ridge \smash{$\lambda \lVert \bD \bbeta \rVert_2^2$}, and generalized lasso $\lambda \lVert \bD \bbeta \rVert_1$. As such, it provides a comprehensive class that encompasses the most~widely employed regularizations within modern applications, including in mixed effects~modeling, nonparametric smoothing, spatial regression, wavelet signal extraction, mixtures of experts, isotonic regression, trend filtering and others \citep[see, e.g.,][]{fused_lasso_2005,elastic_net_paper, lin2006component,gen_lasso_solution_path_2011,zhao2012wavelet,Sangalli_2013_PDE,tibshirani2014adaptive,pastukhov2024fused,helwig2025versatile,javanmard2025prediction}. 
In the above penalty, the parameter~$\lambda \in \mathbb{R}_+$ determines the overall strength of the regularization, $\alpha \in [0,1]$ controls~the relative magnitude of the $L_1$ and $L_2$ norm contributions, while the $m \times p$ matrix $\bD$ defines the~$m$~directions subject to penalization. 
In the following, we provide the details of \textsc{mm} schemes for maximization of logistic log-likelihoods under this general penalty, leveraging~the~three~minorizers~analysed.

Let us first focus on the newly-proposed \textsc{pq}  bound. As discussed in Section~\ref{pen_li}, at the generic iteration $(t+1)$, the \textsc{mm} scheme minorizes the logistic log-likelihood with generalized elastic-net penalty through the \textsc{pq} bound, tangent to it at the current estimate of $\bbeta$ from iteration $(t)$,~and then updates such an estimate by maximizing the resulting tangent minorizer. Leveraging the generalized lasso representation in \eqref{eq_plq_bound_beta_intercept} of the \textsc{pq}  bound, and collecting the generalized ridge term of the penalty within the quadratic component, the resulting maximization problem becomes
\begin{equation}\label{g_l_PQ_MM}
	\underset{\bbeta \in \mathbb{R}^p}{\text{argmax}} \left\{ \; -\frac{1}{2} \bbeta^\top \mathbfcal{Q}_{\textsc{pq}}^{(t)} \bbeta + 
	\bbeta^\top \br_{\textsc{pq}} + \lambda \alpha \| \mathbfcal{D}_{\textsc{pq}}^{(t)} \bbeta \|_1 \; \right\},
\end{equation}
with \smash{$\mathbfcal{Q}_{\textsc{pq}}^{(t)} = \bX^\top \bW_{\textsc{pq}}^{(t)} \bX {+} \lambda (1-\alpha) \bD^\top \bD$}, \smash{$\br_{\textsc{pq}}= \bX^\top (\by - 0.5 {\cdot} {\bf 1}_n)$}, {$\mathbfcal{D}_{\textsc{pq}}^{(t)} =  [ (1/\lambda \alpha)\bX^{\top}\bN_{\textsc{pq}}^{(t)}{,} \bD^{\top} ]^{\top}$}, and \smash{$\bW_{\textsc{pq}}^{(t)} $} defined as in Section~\ref{sec_plq_gen_lasso}, after replacing \smash{$\tilde{\bbeta}$} with  \smash{${\bbeta}^{(t)}$}. Although the above optimization does not admit a closed-form solution due to the $L_1$ term \smash{$ \lambda \alpha \| \mathbfcal{D}_{\textsc{pq}}^{(t)} \bbeta \|_1$},~it~can~be~crucially~regarded as an instance of the standard generalized lasso  problem \citep{gen_lasso_solution_path_2011, gen_lasso_dual_path_2016}, which can be efficiently solved through either quadratic programming~\citep{Goldfarb1983_QP,Nesterov_Nemirovskii_1994_book_QP} or the alternating direction method of multipliers (\textsc{admm}) \citep{Boyd_2011_ADMM, Zhu_2017_ADMM}.
Here we opt for the \textsc{admm} scheme, which is well suited~for high-dimensional optimization,  leverages sparsity efficiently, and can be further accelerated via warm-start initialization and preconditioning. To this end, the maximization~\eqref{g_l_PQ_MM} at every step of  \textsc{mm} can be equivalently reformulated by introducing a set of auxiliary variables $\bz \in \mathbb{R}^{n+m}$~that~yield the following constrained optimization problem
\begin{equation*}
	\underset{\bbeta \in \mathbb{R}^p, \, \bz \in \mathbb{R}^{n+m}}{\text{argmax}} \left\{ \; -\frac{1}{2} \bbeta^\top \mathbfcal{Q}_{\textsc{pq}}^{(t)} \bbeta + 
	\bbeta^\top \br_{\textsc{pq}} - \lambda \alpha \|\bz\|_1 \;  \Bigm\vert \; \bz = \mathbfcal{D}_{\textsc{pq}}^{(t)} \bbeta \; \right\}.
\end{equation*}
Then, the \textsc{admm} scheme iteratively solves this constrained problem by searching~for~a~saddle~point of the associated augmented Lagrangian function \citep[see, e.g.,][]{Boyd_2011_ADMM, Zhu_2017_ADMM}, i.e.,
\begin{equation*}
	\mathcal{L}_{\textsc{pq}}^{(t)}(\bbeta, \bz, \bu) = -\frac{1}{2} \bbeta^\top \mathbfcal{Q}_{\textsc{pq}}^{(t)} \bbeta + 
	\bbeta^\top \br_{\textsc{pq}} - \lambda \alpha \|\bz\|_1 - \rho^{(t)} \bu^\top( \mathbfcal{D}_{\textsc{pq}}^{(t)} \bbeta - \bz) - \frac{1}{2} \rho^{(t)} \| \mathbfcal{D}_{\textsc{pq}}^{(t)} \bbeta - \bz \|_2^2,
\end{equation*}
where $\bu \in \mathbb{R}^{n+m}$ denotes a vector of (scaled) Lagrange multipliers and $\rho^{(t)} > 0$ is a regularization parameter penalizing the constraint violations, which are quantified by the $L_2$ term \smash{$\| \mathbfcal{D}_{\textsc{pq}}^{(t)} \bbeta - \bz \|_2^2$}.
Denoting by $(k)$ the inner iteration counter and initializing $\bbeta^{(t,0)} = \bbeta^{(t)}$, the \textsc{admm} cycles until convergence over the following closed-form updates for $\bbeta$, $\bz$ and $\bu$:
\begin{eqnarray}\label{inn_ADMM}
\begin{split}
	\bbeta^{(t,k+1)} &= (\mathbfcal{Q}_{\textsc{pq}}^{(t)} + \rho^{(t)} \mathbfcal{D}_{\textsc{pq}}^{(t)\top} \mathbfcal{D}_{\textsc{pq}}^{(t)} )^{-1} ( \br_{\textsc{pq}} + \mathbfcal{D}_{\textsc{pq}}^{(t)} (\bz^{(k)} - \bu^{(k)}) ), \\
	\bz^{(k+1)} &= \mathcal{S}_{\lambda \alpha / \rho^{(t)}} (\mathbfcal{D}_{\textsc{pq}}^{(t)} \bbeta^{(t,k+1)} + \bu^{(k)} ), \\
	\bu^{(k+1)} &= \bu^{(k)} + \mathbfcal{D}_{\textsc{pq}}^{(t)} \bbeta^{(t,k+1)} - \bz^{(k+1)},
\end{split}
\end{eqnarray}
where $\mathcal{S}_\delta(r)$ is the so-called soft-thresholding operator \citep{Hastie2015_book} defined as
\begin{equation}\label{soft_threshold}
	\mathcal{S}_\delta(r) = 
	\sign(r) (\vert r \vert - \delta)_{+} = 
	\begin{cases}
		\, r  - \delta & \quad \text{if} \;\;  r >0 \;\; \text{and} \;\; \delta<\vert r \vert, \\
		\, 0 & \quad \text{if} \;\; \delta \geq \vert r \vert, \\
		\, r  + \delta & \quad \text{if} \;\;  r <0 \;\; \text{and} \;\; \delta<\vert r \vert.
	\end{cases}
\end{equation}
At convergence of the inner \textsc{admm} optimization, after $K$ iterations, we set $\bbeta^{(t+1)} = \bbeta^{(t,K)}$.~Regardless of the specific implementation, the computational bottleneck of this routine comes~from the calculation of \smash{$(\mathbfcal{Q}_{\textsc{pq}}^{(t)} + \rho^{(t)} \mathbfcal{D}_{\textsc{pq}}^{(t)\top} \mathbfcal{D}_{\textsc{pq}}^{(t)} )^{-1} ( \br_{\textsc{pq}} + \mathbfcal{D}_{\textsc{pq}}^{(t)} (\bz^{(k)} - \bu^{(k)})) $}. Crucially,  in this expression the inverse does not vary with $(k)$, and hence, a convenient decomposition can be pre-computed just once within  the inner \textsc{admm} optimization. Leveraging either the Cholesky decomposition~or Woodbury identity, depending on whether $p<n$ or not, the resulting computational cost of~the \textsc{admm} cycle (and hence of each \textsc{mm} step) becomes $\mathcal{O}(\min\{p^3,n^3\} + np^2)$. Notice~that~in~these calculations we omit~the $\mathcal{O}(Kp^2)$ cost of the matrix-vector products for the updates in \eqref{inn_ADMM},~since, in practice, the number $K$ of iterations to reach convergence is often negligible relative to $n$. In our implementation, the inner  \textsc{admm} convergence is assessed by monitoring the increment in the objective function in \eqref{g_l_PQ_MM},~and~stopping the routine when such an increment is below $10^{-9}$. For the convergence of the entire \textsc{mm} routine we consider, instead, a $10^{-8}$ threshold both on the absolute and relative increments of the~original~objective~function \smash{$\ell(\bbeta)-\lambda [ \alpha \lVert \bD \bbeta \rVert_1{+}0.5{\cdot}(1-\alpha) \lVert \bD \bbeta \rVert_2^2 ]$}, where $\ell(\bbeta)$ is the logistic log-likelihood.

The aforementioned derivations and computational considerations apply directly also to the~\textsc{mm} schemes relying on  \textsc{bl}  \citep{Bohning1988} and  \textsc{pg} \citep[][]{Jaakkola_Jordan_2000} minorizers. In particular, replacing the proposed \textsc{pq}  bound with either  \textsc{bl}  or \textsc{pg}, yield \textsc{mm} schemes~relying on the same steps, with \smash{$\mathbfcal{Q}_{\textsc{pq}}^{(t)}$, $\br_{\textsc{pq}}$ and $\mathbfcal{D}_{\textsc{pq}}^{(t)}$} replaced by the corresponding counterparts under \textsc{bl} and \textsc{pg}. For \textsc{pg}, these are 
\begin{equation*}
\mathbfcal{Q}_{\textsc{pg}}^{(t)} = \bX^\top \bW_{\textsc{pg}}^{(t)} \bX + \lambda (1-\alpha) \bD^\top \bD, \qquad \br_{\textsc{pg}}=\br_{\textsc{pq}}, \qquad \mathbfcal{D}_{\textsc{pg}}^{(t)} = \bD.
\end{equation*}
Conversely, for \textsc{bl}, we have 
\begin{equation*}
\mathbfcal{Q}_{\textsc{bl}}^{(t)} = 0.25 {\cdot} \bX^\top \bX + \lambda (1-\alpha) \bD^\top \bD, \qquad \br^{(t)}_{\textsc{bl}}=\bX^{\top}(\by-\boldsymbol{\pi}^{(t)}+0.25{\cdot}\bX\bbeta^{(t)}), \qquad \mathbfcal{D}_{\textsc{bl}}^{(t)} = \bD,
\end{equation*}
where $\boldsymbol{\pi}^{(t)}=[\pi(\bx^{\top}_1\bbeta^{(t)}), \ldots, \pi(\bx^{\top}_n\bbeta^{(t)})]^{\top}$. 

The main difference between the \textsc{mm} schemes induced by these two  minorizers and the one~derived for the proposed \textsc{pq} bound is that~in~the~\textsc{admm} maximization \textsc{bl} and \textsc{pg} only require $m$ auxiliary variables $\bz$ and Lagrange multipliers $\bu$, rather than $m+n$. This is because the quadratic minorization of the logistic log-likelihood operated by  \textsc{bl} and \textsc{pg} does not require $L_1$ contributions. Therefore, the only non-differentiable components are those arising from the $m$ generalized lasso terms in the penalty \smash{$\lambda [ \alpha \lVert \bD \bbeta \rVert_1+0.5{\cdot}(1-\alpha) \lVert \bD \bbeta \rVert_2^2 ]$}. Nonetheless, the overall complexity~of each iteration of the resulting \textsc{mm} is still $\mathcal{O}(\min\{p^3,n^3\} + \smash{np^2})$, since it is dominated by the update of $\bbeta$ in \eqref{inn_ADMM}, which has the same dimensions,~and~hence, computational cost as in the \textsc{pq} case. Convergence assessments for the two \textsc{mm} schemes based on the \textsc{bl} and \textsc{pg} minorizers~rely~on the same checks and thresholds as those adopted for \textsc{pq}. 

\vspace{22pt}
\section{Variational inference via tangent minorizers}

In Bayesian contexts, variational inference aims at approximating the target intractable posterior $p(\bbeta \mid \by) \propto p(\bbeta) \prod_{i=1}^{n} p(y_i \mid \bbeta, \bx_i)$ through a  density $q(\bbeta)$ within a simpler, pre-specified, family~$\mathcal{Q}$. Recalling e.g., \citet[][]{Blei_2017}, such an approximation $q(\bbeta)$ is formally derived by  solving the Kullback-Leibler (\textsc{kl}) minimization problem $\argmin_{q \in \mathcal{Q}} \,\textsc{kl}[ q(\bbeta) \parallel p(\bbeta \mid \by) ]$ or, equivalently, by maximizing the evidence lower bound (\textsc{elbo}), which is defined as 
\begin{eqnarray*}
\begin{split}
	\textsc{elbo}[q(\bbeta)] 
	& = \sum\nolimits_{i=1}^{n} \EX_{q} [\log p(y_i \mid \bbeta, \bx_i)] + \EX_{q}[\log \{ p(\bbeta) / q(\bbeta) \}] \\
	& = - \,\textsc{kl}[ q(\bbeta) \parallel p(\bbeta \mid \by) ] + \log p(\by) \leq \log p(\by),
\end{split}
\end{eqnarray*}
where $\EX_{q}[\,\bigcdot\,]$ is the variational expectation computed with respect to the approximating density $q(\bbeta)$, while \smash{$p(\by) = \int p(\bbeta) \prod_{i=1}^{n} p(y_i \mid \bbeta, \bx_i) \,d\bbeta$} is the (intractable) marginal likelihood.

Recalling Section~\ref{varb}, we focus here on variational inference for the coefficients $\bbeta$ in Bayesian logistic regression, where $\log p(y_i \mid \bbeta, \bx_i)$ is defined as in  \eqref{log_lik_log}, $p(\bbeta) = \phi(\bbeta; \bOmega_0)$ corresponds to a zero-mean Gaussian prior with covariance matrix $\bOmega_0$, while $ \mathcal{Q}$ denotes the family of multivariate normal approximating densities, i.e., $\mathcal{Q} = \{ q(\bbeta) : q(\bbeta) = \phi(\bbeta - \bmu; \bOmega) \}$. This setting aligns~with standard practice in routine implementations \citep[][]{Jaakkola_Jordan_2000, Durante_Rigon_2019} and yields the following expression for the \textsc{elbo}
\begin{equation}
	\label{logit_elbo}
	\begin{aligned}
		&\textsc{elbo}[\phi(\bbeta - \bmu; \bOmega)] = \\
		& \qquad \sum\nolimits_{i=1}^{n} \big((y_i - 0.5) \bx_i^\top \bmu + \EX_{q}[h(\bx_i^\top \bbeta)] \big)  - \frac{1}{2} \big( \bmu^\top \bOmega_0^{-1} \bmu + \trace(\bOmega_0^{-1} \bOmega) - \log|\bOmega| \big) + c,
	\end{aligned}
\end{equation}
where $\trace(\,\bigcdot\,)$ and $|\bigcdot|$ denote the trace and the determinant of a matrix, respectively, whereas $c$~is~a constant term not depending on $\bmu$ and $\bOmega$. 

Under \eqref{logit_elbo}, variational inference reduces to maximize the \textsc{elbo} with respect to the mean~vector $\bmu$ and the covariance matrix $ \bOmega$ of the Gaussian approximating density. However, such an \textsc{elbo} lacks a closed-form expression due to the  intractable integral $\EX_{q}[h(\bx_i^\top \bbeta)]$. A possible solution to address this issue is to replace $h(\bx_i^\top \bbeta)$ with a convenient tangent minorizer $\hbound(\bx_i^\top\bbeta \mid \zeta_i)$~under which $\EX_{q}[\hbound(\bx_i^\top\bbeta \mid \zeta_i)]$ can be computed analytically. This provides a closed-form lower bound $\textsc{elbo}[\phi(\bbeta - \bmu; \bOmega) \mid \bzeta]$, $\bzeta = (\zeta_1, \dots, \zeta_n)^\top \in \R^n$, of the the original  \textsc{elbo}, which can be used in place of \eqref{logit_elbo}  as a measure of approximation accuracy to be maximized with respect to $\bmu$,~$\bOmega$ and $\bzeta$ for obtaining a convenient variational approximation of the target posterior density $p(\bbeta \mid \by)$. Such a direction has been actively explored and successfully implemented in several contributions~leveraging the \textsc{bl} and \textsc{pg} bounds \citep[see, e.g.,][]{Jaakkola_Jordan_2000, bishop2003bayesian, Marlin_ICML2011_Piecewise_Q_bounds,Ren2011LogisticSP,khan12_multinomial,Carbonetto2012ScalableVI,wand2017fast,Durante_Rigon_2019,jin2025variational}, which admit the following closed-form expectations
\begin{eqnarray}
\label{min_bl_pg_el}
\begin{split}
	&\EX_{q}[\hbound_{\textsc{bl}}(\bx_i^\top \bbeta \mid \zeta_i)]  = h(\zeta_i) + h'(\zeta_i) (\bx_i^\top \bmu - \zeta_i) - 0.25 (\EX_{q}[(\bx_i^\top \bbeta)^2] - 2 \,\bx_i^\top \bmu \,\zeta_i + \zeta_i^2)/2,  \qquad \ \\
	&\EX_{q}[\hbound_{\textsc{pg}}(\bx_i^\top \bbeta \mid \zeta_i)]  = h(\zeta_i) - w_{\textsc{pg}}(\zeta_i) (\EX_{q}[(\bx_i^\top \bbeta)^2] - \zeta_i^2)/2,
\end{split}
\end{eqnarray}
where {$\EX_{q}[(\bx_i^\top \bbeta)^2] = (\bx_i^\top \bmu)^2 + \bx_i^\top \bOmega \,\bx_i$} under the selected Gaussian approximating family $\mathcal{Q}$. Crucially, this choice of $\mathcal{Q}$ yields a closed-form expression also for  $\EX_{q}[|\bx_i^\top \bbeta|] $,~namely~$\EX_{q}[|\bx_i^\top \bbeta|] = (\bx_i^\top \bmu)[ 2 \,\Phi(\bx_i^\top \bmu; \bx_i^\top \bOmega \bx_i) - 1 ] + 2 (\bx_i^\top \bOmega \bx_i) \phi(\bx_i^\top \bmu; \bx_i^\top \bOmega \bx_i)$. Hence, replacing the above quadratic minorizers with the newly-proposed \textsc{pq} one (see Section~\ref{sec_PLQ}) ensures that the variational expectation can still be derived in closed form.   In particular, we obtain
\begin{eqnarray}
\label{min_pq_el}
\begin{split}
	\EX_{q}[\hbound_{\textsc{pq}}(\bx_i^\top \bbeta \mid \zeta_i)] & = h(\zeta_i) - w_{\textsc{pq}}(\zeta_i) (\EX_{q}[(\bx_i^\top \bbeta)^2] - \zeta_i^2)/2 - \nu_{\textsc{pq}}(\zeta_i) (\EX_{q}[|\bx_i^\top \bbeta|] - |\zeta_i|), \quad 
\end{split}
\end{eqnarray}
where $ \nu_{\textsc{pq}}(\zeta_i)$ is defined in \eqref{plq_original_space}. Besides preserving analytical tractability, the above  \textsc{pq} bound achieves improved tightness in characterizing the original \textsc{elbo}, relative to the approximations~induced by the \textsc{bl} and \textsc{pg} minorizers. More specifically, as a direct consequence  of Lemma~\ref{Lemma_optimality_pg_1D_space} and Proposition~\ref{prop:PLQ_dom_PG}, in combination with \eqref{logit_elbo}, we have that
\begin{align*}
	& \textsc{elbo}[\phi(\bbeta - \bmu; \bOmega)]
	\geq \textsc{elbo}_{\textsc{pq}}[\phi(\bbeta - \bmu; \bOmega) \mid \bzeta] \geq \\
	& \qquad \geq \textsc{elbo}_{\textsc{pg}}[\phi(\bbeta - \bmu; \bOmega) \mid \bzeta] 
	\geq \textsc{elbo}_{\textsc{bl}}[\phi(\bbeta - \bmu; \bOmega) \mid \bzeta],
\end{align*}
which further implies
\begin{align*}
	& \textsc{elbo}[\phi(\bbeta - \bmu; \bOmega)]
	\geq {\max}_{\bzeta} \,\textsc{elbo}_{\textsc{pq}}[\phi(\bbeta - \bmu; \bOmega) \mid \bzeta] \geq \\
	& \qquad \geq {\max}_{\bzeta} \,\textsc{elbo}_{\textsc{pg}}[\phi(\bbeta - \bmu; \bOmega) \mid \bzeta]
	\geq {\max}_{\bzeta} \,\textsc{elbo}_{\textsc{bl}}[\phi(\bbeta - \bmu; \bOmega) \mid \bzeta],
\end{align*}
for any $\bmu$ and $\bOmega$. As discussed in Section~\ref{varb}, this improved tightness is expected to yield more accurate posterior approximations under the \textsc{pq} bound than those obtained by \textsc{bl} and~\textsc{pg} \citep[e.g.,][]{Ormerod_2010_VB, Blei_2017}.

In Sections \ref{subsec_message_passing}--\ref{subsec_zeta_update}, we detail the steps of a simple coordinate-ascent recursion for maximizing the \textsc{elbo} lower bounds induced by the \textsc{bl}, \textsc{pg} and \textsc{pq} minorizers. As discussed above, these three alternative objective functions are obtained by replacing  $\EX_{q}[h(\bx_i^\top \bbeta)]$ in \eqref{logit_elbo}, with $\EX_{q}[\hbound_{\textsc{bl}}(\bx_i^\top \bbeta \mid \zeta_i)] $, $\EX_{q}[\hbound_{\textsc{pg}}(\bx_i^\top \bbeta \mid \zeta_i)] $ and $\EX_{q}[\hbound_{\textsc{pq}}(\bx_i^\top \bbeta \mid \zeta_i)] $, respectively, whose closed-form expressions are available in  \eqref{min_bl_pg_el}--\eqref{min_pq_el}. This yields a tractable routine relying on the generic recursion
\begin{align}
	\label{vb_update_mu_omega}
	(\bmu^{(t+1)}, \bOmega^{(t+1)}) & = \argmax_{(\bmu, \bOmega)} \; \textsc{elbo}[\phi(\bbeta - \bmu; \bOmega) \mid \bzeta^{(t)}], \\
	\label{vb_update_zeta}
	\bzeta^{(t+1)} & = \argmax_{\bzeta} \; \textsc{elbo}[\phi(\bbeta - \bmu^{(t+1)}; \bOmega^{(t+1)}) \mid \bzeta],
\end{align}
where \eqref{vb_update_mu_omega} leverages variational message passing (\textsc{vmp}) \citep[e.g.,][]{Knowles_Minka_NIPS2011} (see Section \ref{subsec_message_passing}), while \eqref{vb_update_zeta} admits closed-form solution (see Section \ref{subsec_zeta_update}). 

Convergence of such a routine is assessed by monitoring the relative and absolute increments of $\textsc{elbo}[\phi(\bbeta - \bmu; \bOmega) \mid \bzeta]$ from $(t)$ to $(t+1)$, and stopping when both of these increments~are~below a $10^{-10}$ threshold.

\vspace{25pt}
\subsection{Updating the variational distribution via message passing}
\label{subsec_message_passing}
When lack of conditional conjugacy hinders closed form maximization in \eqref{vb_update_mu_omega}, popular~solutions rely on a natural gradient step, yielding to the so-called  variational message passing (\textsc{vmp}) \citep[][]{Knowles_Minka_NIPS2011, Wand_2014_KMW}. As shown in \cite{Castiglione_2025_pGLMM},~for~Gaussian~\textsc{vb} under generalized linear models, the \textsc{vmp} update proposed by \cite{Wand_2014_KMW} reduces~to~the iterative re-weighted least square recursion
\begin{equation}
	\label{vmp_update}
	\bOmega^{(t+1)} = \big( \bX^\top \bW_{\textsc{vmp}}^{(t)} \bX + \bOmega_0^{-1} \big)^{-1}, \quad
	\bmu^{(t+1)} = \bOmega^{(t+1)} \bX^\top \bW_{\textsc{vmp}}^{(t)} \,\bz_{\textsc{vmp}}^{(t)}.
\end{equation}
Here \smash{$\bz_{\textsc{vmp}}^{(t)} = ( z_{\textsc{vmp},1}^{(t)}, \dots, z_{\textsc{vmp},n}^{(t)} )^\top$} and \smash{$\bW_{\textsc{vmp}}^{(t)} = \diag(\{ \omega_{\textsc{vmp},i}^{(t)} \}_{i=1}^{n} )$} are, respectively, a pseudo-data vector and a weight matrix, whose $i$-th elements are defined as
\begin{align*}
	z_{\textsc{vmp},i}^{(t)} = \bx_i^\top \bmu^{(t)} - \frac{(y_i - 0.5) +  \E_{q^{(t)}}[\hbound'(\bx_i^\top \bbeta \mid \zeta_i^{(t)})]}{\E_{q^{(t)}}[\hbound''(\bx_i^\top \bbeta \mid \zeta_i^{(t)})]}, \quad
	\omega_{\textsc{vmp},i}^{(t)} = - \E_{q^{(t)}}[\hbound''(\bx_i^\top \bbeta \mid \zeta_i^{(t)})],
\end{align*}
where $\hbound'(r \mid \zeta)$ and $\hbound''(r \mid \zeta)$ are the first and second order derivatives of  $\hbound(r \mid \zeta)$.

Under the \textsc{bl} lower bound, it is easy to show that the \textsc{vmp} pseudo-data and weight are given by \smash{$z_{\textsc{vmp},i}^{(t)} = 4 (y_i - \pi(\zeta_i^{(t)}) + 0.25 \cdot \zeta_i^{(t)})$} and \smash{$\omega_{\textsc{vmp},i}^{(t)} = 0.25$}, thus leading to the fixed-point update
\begin{align*}
	\bOmega_{\textsc{bl}}^{(t+1)} = (0.25 \cdot \bX^\top \bX + \bOmega_0^{-1})^{-1}, \quad
	\bmu_{\textsc{bl}}^{(t+1)} = \bOmega_{\textsc{bl}}^{(t+1)} \bX^\top (\by - \bpi^{(t)} + 0.25 \cdot \bzeta^{(t)}),
\end{align*}
where \smash{$\bpi^{(t)} = (\pi(\zeta_1^{(t)}), \dots, \pi(\zeta_n^{(t)}))^\top$}. 
Note that the \textsc{bl} bound on the log-likelihood curvature~translates into the \textsc{vb} covariance matrix $\bOmega_{\textsc{bl}}^* = (0.25 \cdot \bX^\top \bX + \bOmega_0^{-1})^{-1}$, which is constant across all the iterations. However, such a simplification, while convenient from a computational viewpoint, is expected to worsen the quality of the resulting approximation. This is also consistent with~the~relative tightness of the \textsc{bl} bound~and~with~the~empirical results we discuss in Section~\ref{sec_application}.

As for the \textsc{pg} lower bound, the resulting \textsc{vmp} pseudo-data and the corresponding weights are given by \smash{$z_{\textsc{vmp},i}^{(t)} = (y_i - 0.5) / w_{\textsc{pg}}(\zeta_i^{(t)})$} and \smash{$\omega_{\textsc{vmp},i}^{(t)} = w_{\textsc{pg}}(\zeta_i^{(t)})$}, yielding to the \textsc{vmp} update
\begin{align*}
	\bOmega_{\textsc{pg}}^{(t+1)} = (\bX^\top \bW_{\textsc{pg}}^{(t)} \bX + \bOmega_0^{-1})^{-1}, \quad
	\bmu_{\textsc{pg}}^{(t+1)} = \bOmega_{\textsc{pg}}^{(t+1)} \bX^\top (\by - 0.5 \cdot {\bf 1}_n),
\end{align*}
where \smash{$\bW_{\textsc{pg}}^{(t)} = \diag(\{ w_{\textsc{pg}}(\zeta_i^{(t)}) \}_{i=1}^{n})$}.
Not surprisingly, \textsc{vmp} for \textsc{pg} is equivalent to the \textsc{vb} update proposed by \cite{Jaakkola_Jordan_2000} and further discussed by \cite{Durante_Rigon_2019}.~This happens because under conditionally conjugate models (as in the \textsc{pg} case) \textsc{vmp} reduces to \textsc{cavi}, as discussed in, e.g., \cite{Knowles_Minka_NIPS2011} and \cite{Tan_Nott_2013_GLMM}. 
Unlike the \textsc{bl} approximation, \textsc{pg} allows for a more flexible form of the covariance matrix, where each observation enters with a weight, thus iteratively  adapting the bound to the local curvature of the original~\textsc{elbo}.

Finally, under the newly-proposed \textsc{pq} lower bound the \textsc{vmp} pseudo-data and weight are
\vspace{-2pt}
\begin{align*}
	z_{\textsc{vmp},i}^{(t)} & = \bx_i^\top \bmu_{\textsc{pq}}^{(t)} + \frac{y_i - 0.5 - w_{\textsc{pq}}(\zeta_i^{(t)}) \bx_i^\top \bmu_{\textsc{pq}}^{(t)} - \nu_{\textsc{pq}}(\zeta_i^{(t)}) \EX_{q^{(t)}}[\sign(\bx_i^\top \bbeta)]}{w_{\textsc{pq}}(\zeta_i^{(t)}) + 2 \,\nu_{\textsc{pq}}(\zeta_i^{(t)})\phi(\bx_i^\top \bmu_{\textsc{pq}}^{(t)}; \bx_i^\top \bOmega_{\textsc{pq}}^{(t)} \bx_i)}, \\
	\omega_{\textsc{vmp},i}^{(t)} & = w_{\textsc{pq}}(\zeta_i^{(t)}) + 2  \,\nu_{\textsc{pq}}(\zeta_i^{(t)}) \phi(\bx_i^\top \bmu_{\textsc{pq}}^{(t)}{;} \bx_i^\top \bOmega_{\textsc{pq}}^{(t)} \bx_i),
\end{align*}
with \smash{$\EX_{q^{(t)}}[\sign(\bx_i^\top \bbeta)] = 2 \,\Phi(\bx_i^\top \bmu_{\textsc{pq}}^{(t)}{;} \bx_i^\top \bOmega_{\textsc{pq}}^{(t)} \bx_i) - 1$}.  Therefore, replacing these quantities in  \eqref{vmp_update} yields a closed-form  \textsc{vmp} update also for~\smash{$\bOmega_{\textsc{pq}}^{(t+1)}$} and \smash{$\bmu_{\textsc{pq}}^{(t+1)}$}. Differently from both \textsc{bl} and \textsc{pg}, the resulting \textsc{pq} recursion dynamically adapts both the weights and the pseudo-data of the iterative least squares cycle. This facilitates faster  convergence and guarantees improved tightness, thereby increasing the approximation quality of the resulting variational approximation. Such an intuition is also confirmed by the empirical experiments outlined in Section \ref{sec_application}, where the proposed \textsc{pq} minorizer achieves improved approximation accuracy than  \textsc{bl} and \textsc{pg} (see Figure~\ref{figure_tvd_fields}).

\vspace{15pt}
\subsection{Updating the local parameters via exact maximization}
\label{subsec_zeta_update}
Let us now focus on the update \eqref{vb_update_zeta} for the local parameters $\{ \zeta_i \}_{i=1}^n$ under the considered lower bounds (\textsc{bl}, \textsc{pg}, and \textsc{pq}). To this end, note that, for any $\bmu^*$ and $\bOmega^*$, $\argmax_{\bzeta} \,\textsc{elbo}[\phi(\bbeta - \bmu^*; \bOmega^*) \mid \bzeta] = \smash{\argmax_{\bzeta} \,\sum_{i=1}^{n} \EX_{q^*}[\hbound(\bx_i^\top \bbeta \mid \zeta_i)]}$, where $q^*(\bbeta) = \phi(\bbeta - \bmu^*; \bOmega^*)$.  Thanks to the separability of the objective function, we can optimize each $\zeta_i$, $i = 1, \dots, n$, separately. This implies finding the maximum of $Q(\zeta_i) := \EX_{q^*}[\hbound(\bx_i^\top \bbeta \mid \zeta_i)]$ for the three considered bounds, $\hbound_{\textsc{bl}}$, $\hbound_{\textsc{pg}}$, and $\hbound_{\textsc{pq}}$.

Under \textsc{bl}, the $\zeta$-update is obtained as the maximizer of
\begin{equation*}
	Q_{\textsc{bl}}(\zeta_i) = h(\zeta_i) + h'(\zeta_i) (\bx_i^\top \bmu^* - \zeta_i) - 0.25 (\EX_{q^*}[(\bx_i^\top \bbeta)^2] - 2 \,\bx_i^\top \bmu^* \,\zeta_i + \zeta_i^2)/2.
\end{equation*}
Taking the first derivative of $Q_{\textsc{bl}}(\zeta_i)$ and simplifying the redundant terms, we obtain $Q_{\textsc{bl}}'(\zeta_i) = [h''(\zeta_i) + 0.25] (\bx_i^\top \bmu^* - \zeta_i)$, which is equal to zero at $\zeta_i^* = \bx_i^\top \bmu^*$. 
Therefore, the \textsc{bl} update for $\bzeta$~is given by \smash{$\zeta_i^{(t+1)} = \bx_i^\top \bmu^{(t+1)}$, for $i = 1, \dots, n$}.

Similarly, under \textsc{pg}, the $\zeta$-update is the maximizer of $\EX_{q^*}[\hbound_{\textsc{pg}}(\bx_i^\top \bbeta \mid \zeta_i)]$.
Note that,~in~this~case $\EX_{q^*}[\hbound_{\textsc{pg}}(\bx_i^\top \bbeta \mid \zeta_i)]$ depends on $\zeta_i$ solely through its absolute value $|\zeta_i|$. Hence, we can restrict~our attention to the maximization over $\rho_i = |\zeta_i|$ of the objective function
\begin{equation*}
	Q_{\textsc{pg}}(\rho_i) = h(\rho_i) - w_{\textsc{pg}}(\rho_i) (\EX_{q^*}[(\bx_i^\top \bbeta)^2] - \rho_i^2)/2.
\end{equation*}
This has first derivative \smash{$Q_{\textsc{pg}}'(\rho_i) = h'(\rho_i) {+} w_{\textsc{pg}}(\rho_i) \rho_i - w_{\textsc{pg}}'(\rho_i) (\EX_{q^*}[(\bx_i^\top \bbeta)^2] - \rho_i^2)$}. Leveraging  the identity $h'(\rho_i) + w_{\textsc{pg}}(\rho_i) \rho_i = 0$, the first two terms cancel out and the solution to the equation $Q_{\textsc{pg}}'(\rho_i) = 0$ is obtained at $\rho_i^* = \EX_{q^*}[(\bx_i^\top \bbeta)^2]^{1/2}$. 
Thus, the \textsc{pg} update for $\bzeta$ is given by $\smash{\zeta_i^{(t+1)}} = \smash{\sign(\bx_i^\top \bmu^{(t+1)}) \EX_{q^{(t+1)}}[(\bx_i^\top \bbeta)^2]^{1/2}}$, for $i = 1, \dots, n$, where the term \smash{$\sign(\bx_i^\top \bmu^{(t+1)})$} is introduced to fix the sign ambiguity. The former is precisely the variational \textsc{em} update of each $\zeta_i$, $i=1, \ldots, n$ derived by \cite{Jaakkola_Jordan_2000}.

Finally, let us consider the $\zeta$-update for the proposed \textsc{pq} lower bound.
Similar to the \textsc{pg} case, also $\EX_{q^*}[\hbound_{\textsc{pq}}(\bx_i^\top \bbeta \mid \zeta_i)]$ depends on $\zeta_i$ only via its absolute value $|\zeta_i|$, allowing us to restrict our attention to the maximization over $\rho_i = |\zeta_i|$ of the objective function
\begin{equation*}
	Q_{\textsc{pq}}(\rho_i) := - \log\cosh(\rho_i/2) - w_{\textsc{pq}}(\rho_i) \big(\EX_{q^*}[(\bx_i^\top \bbeta)^2] - \rho_i^2 \big)/2 - \nu_{\textsc{pq}}(\rho_i) \big( \EX_{q^*}[|\bx_i^\top \bbeta|] - \rho_i \big) + c,
\end{equation*}
where $c$ is a constant term not depending on $\rho_i$.
In searching for a critical point of \smash{$Q_{\textsc{pq}}(\rho_i)$},~we~must solve again \smash{$Q_{\textsc{pq}}'(\rho_i) = 0$}, where 
\begin{align*}
	Q_{\textsc{pq}}'(\rho_i) = 
	& - 0.5 \cdot \tanh(\rho_i/2) + \nu_{\textsc{pq}}(\rho_i) + \rho_i w_{\textsc{pq}}(\rho_i) \\
	& - \nu_{\textsc{pq}}'(\rho_i) \big(\EX_{q^*}[|\bx_i^\top \bbeta|] - \rho_i \big) - w_{\textsc{pq}}'(\rho_i) \big(\EX_{q^*}[(\bx_i^\top \bbeta)^2] - \rho_i^2 \big)/2.
\end{align*}
From the explicit forms of $\nu_{\textsc{pq}}(\rho_i)$ and $w_{\textsc{pq}}(\rho_i)$, we note that $\nu_{\textsc{pq}}(\rho_i) + \rho_i w_{\textsc{pq}}(\rho_i) = \tanh(\rho_i/2)/2$, and that the first three terms of the above expression cancel out.
Moreover, by simple calculations, we obtain $\nu_{\textsc{pq}}'(\rho_i) = w_{\textsc{pq}}(\rho_i) - 0.25 {\cdot} \sech^2(\rho_i/2)$ and $w_{\textsc{pq}}'(\rho_i) = - (2/\rho_i) \nu_{\textsc{pq}}'(\rho_i)$.~This~implies
\begin{align*}
	Q'(\rho_i) 
	& = - \nu_{\textsc{pq}}'(\rho_i) \EX_{q^*}[|\bx_i^\top \bbeta|] - 0.5 \cdot w_{\textsc{pq}}'(\rho_i) \EX_{q^*}[(\bx_i^\top \bbeta)^2] + 0.5 \cdot \big( w_{\textsc{pq}}'(\rho_i) + (2/\rho_i) \nu_{\textsc{pq}}'(\rho_i) \big) \\
	& = 0.5 \cdot w_{\textsc{pq}}'(\rho_i) \big( \rho_i \cdot \EX_{q^*}[|\bx_i^\top \bbeta|] - \EX_{q^*}[(\bx_i^\top \bbeta)^2] \big),
\end{align*}
which is equal to zero for $\rho_i^* = \EX_{q^*}[(\bx_i^\top \bbeta)^2] / \EX_{q^*}[|\bx_i^\top \bbeta|]$. In turn, this gives the update
\begin{equation*}
	\zeta_i^{(t+1)} = \sign(\bx_i^\top \bmu^{(t+1)}) \frac{\EX_{q^{(t+1)}}[(\bx_i^\top \bbeta)^2]}{\EX_{q^{(t+1)}}[|\bx_i^\top \bbeta|]}, \quad i = 1, \dots, n,
\end{equation*}
where the term $\sign(\bx_i^\top \bmu^{(t+1)})$ is introduced to fix the sign ambiguity.

\vspace{16pt}
\subsection{Pseudo-code for variational inference under the \textsc{pq} minorizer}
Algorithm~1 provides the pseudo-code to perform variational inference in logistic regression under the proposed  \textsc{pq} minorizer, leveraging the results and derivations in Sections~\ref{subsec_message_passing}--\ref{subsec_zeta_update}.

\begin{figure}[t]
\centering
    \includegraphics[width=0.9\textwidth]{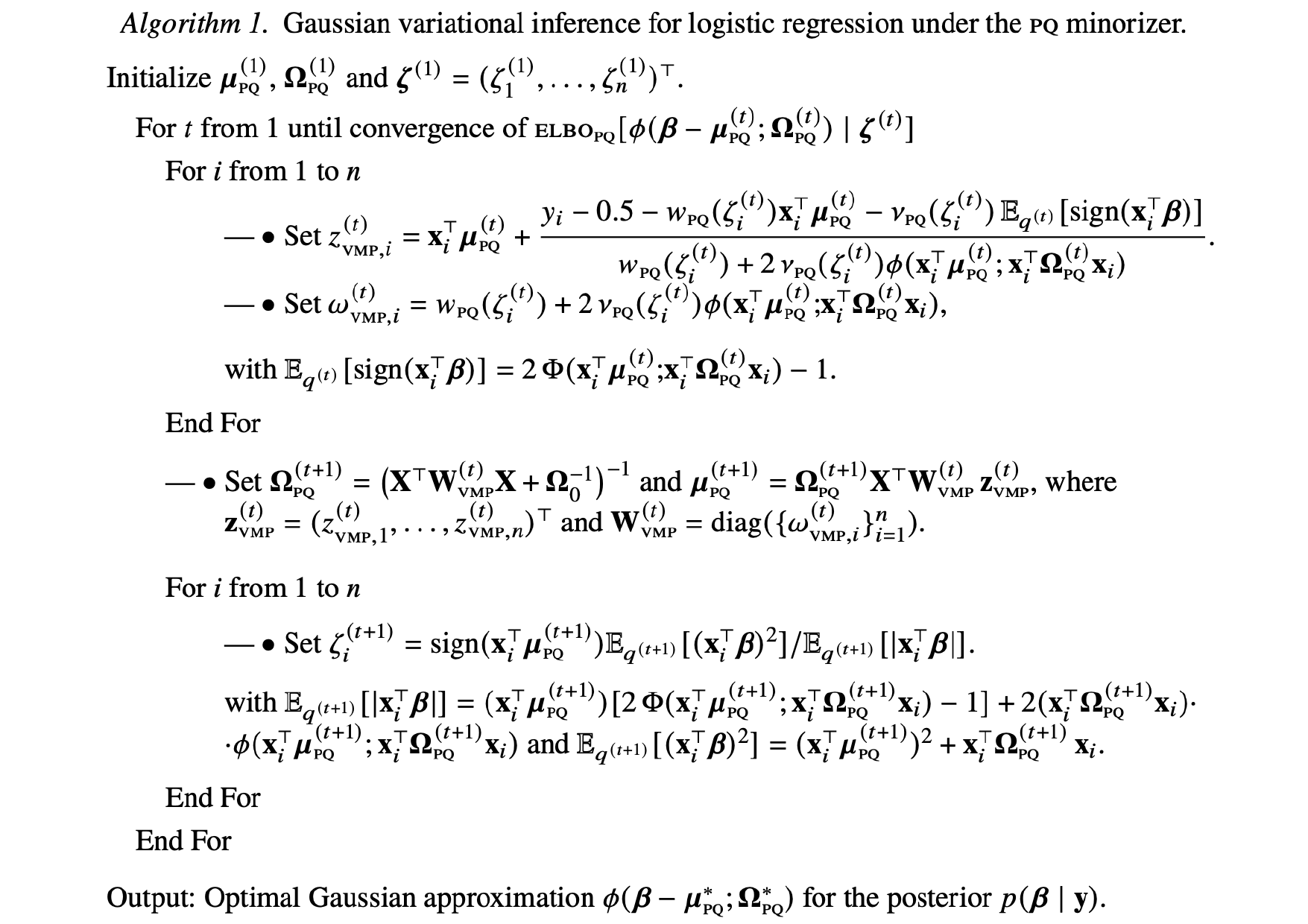}
    \label{algo}
    \vspace{15pt}
\end{figure}

\section{Motor-vehicle theft data from Portland}
In Section \ref{sec_application}, we analyze motor-vehicle theft data from Portland, Oregon, recorded in 2015 by the \textsc{usa} National Institute of Justice and publicly available at \url{https://nij.ojp.gov/funding/real-time-crime-forecasting-challenge-posting}. The original data comprise a total of 1,911 theft events along with the associated spatial locations in the city. To construct~the~dataset employed in our analysis, we overlay a regular square grid over the Portland map with~50 equally spaced segments along both longitude and latitude. Each cell within such a grid is then assigned a binary variable taking value $1$ if the observed number of thefts located in such a cell exceeds~the 75th percentile across all cells and  $0$ otherwise (cells with no available data are discarded from the analysis). As shown in the  left panel of Figure~\ref{figure_portland_data}, this yields a total of $n = 704$ pairs $(y_i, \bv_i)$, $i = 1, \dots, n$, where the binary response $y_i$ indicates whether the $i$th cell belongs or not to a high risk zone, while  $\bv_i \in \mathbb{R}^2 \subset \Gamma$ denotes its spatial location within the Portland map $\Gamma$.

Leveraging the above dataset, our goal is to model the spatial distribution of the high/low risk zone indicators over the Portland map. To this end, we employ a logistic regression for $y_i$ with suitably-specified linear predictor capturing variations in $\mbox{pr}(y_i=1 \mid \bv_i)$ across the city.  Consistent with this goal, we employ a basis expansion approach and set \smash{$\bx_i^\top \bbeta = \sum_{j=1}^{p} \psi_j(\bv_i)\beta_j$},~where $\psi_1(\bv), \dots, \psi_p(\bv)$ denote carefully specified local bases obtained via \textit{finite element methods} (\textsc{fem}) \citep[e.g.,][]{Lindgren_2011_SPDE, Sangalli_2013_PDE}. Under this finite element construction, each basis corresponds one-to-one with a mesh node (i.e., a triangle vertex) arising from a discretization of the spatial domain $\Gamma$ using a fine constrained Delaunay triangulation as implemented in the \texttt{R} package \texttt{fdaPDE} \citep{Sangalli_2021_PDE}; see the right panel Figure~\ref{figure_portland_data}. At the generic $j$th~mesh node, the associated basis $\psi_j(\bv)$ is a piecewise linear taking nonzero values only on the triangles adjacent to the $j$th node, and attaining its maximum value of 1 at such a node. In our implementation~the resulting number of \textsc{fem} bases is  $p = 3103$.

\begin{figure}[t!]
	\centering
	\includegraphics[width=0.94\textwidth]{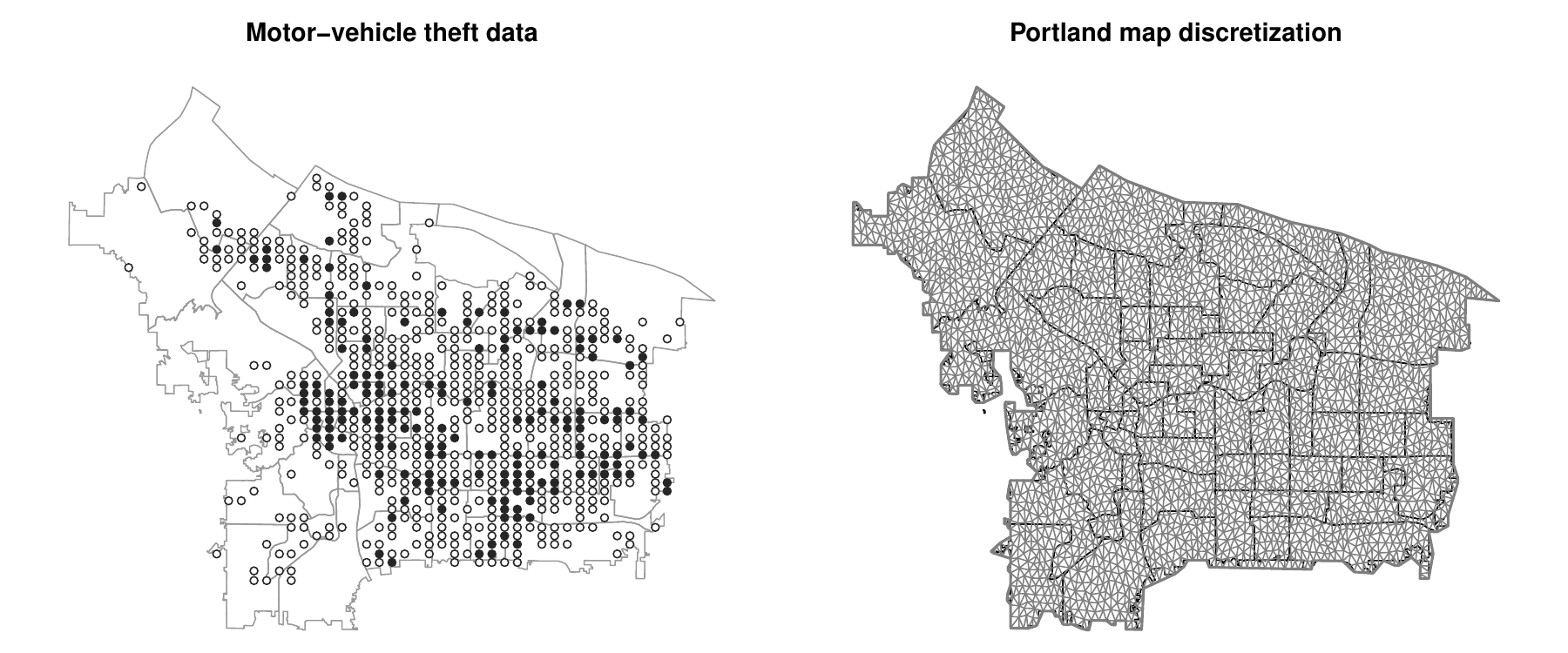}
	\vspace{-5pt}
	\caption{
		Left: Graphical representation of the dataset analyzed (white points indicate low risk cells, black points~denote high risk cells).
		Right: fine triangular discretization of the Portland map.
	}
	\label{figure_portland_data}
\end{figure}

As is common in nonparametric spatial regression relying on \textsc{fem} bases, we consider suitable regularizations among the basis coefficients, which can be introduced either explicitly through~penalized maximum likelihood estimation or implicitly via prior distributions under a Bayesian~approach (see Section \ref{sec_application}). In both settings, these regularizations are controlled by the matrix~$\bD$ that determines which linear combinations of $\bbeta$ are penalized.
Specifically, we adopt a Laplacian regularization  of the form \smash{$\bD = \bR_0^{-1/2}\bR_1$}, where $\bR_0$ and $\bR_1$~are~the~\textit{lumped-mass} and \textit{stiffness} matrices, respectively \citep[][]{Lindgren_2011_SPDE, Sangalli_2013_PDE}.
The former is a diagonal matrix with entries \smash{$\bR_{0,jj} = \sum_{k=1}^{p} \int_\Gamma \psi_j(\bv)\psi_k(\bv)\,d\bv$}, $j = 1, \dots, p$, while the latter is sparse with elements \smash{$\bR_{1,jk} = \int_\Gamma \langle \nabla \psi_j(\bv), \nabla \psi_k(\bv)\rangle\,d\bv$, $j,k = 1, \dots, p$}, where $\nabla \psi_j(\bv) = (\partial \psi_j(\bv)/\partial v_{i1}, \partial \psi_j(\bv)/\partial v_{i2})$ is a vector-valued piecewise-constant function that is nonzero only within the triangles adjacent~to~the $j$th node. Thanks to the local support of the basis functions, both the design and~penalty matrices are highly sparse, enabling efficient computation through sparse linear algebra routines.

\bibliography{ref.bib}

\begin{thebibliography}{45}
\providecommand{\natexlab}[1]{#1}
\providecommand{\url}[1]{\texttt{#1}}
\expandafter\ifx\csname urlstyle\endcsname\relax
  \providecommand{\doi}[1]{doi: #1}\else
  \providecommand{\doi}{doi: \begingroup \urlstyle{rm}\Url}\fi

\bibitem[Arnold and Tibshirani(2016)]{gen_lasso_dual_path_2016}
T.~B. Arnold and R.~J. Tibshirani.
\newblock Efficient implementations of the generalized lasso dual path
  algorithm.
\newblock \emph{J. Comput. Graph. Statist.}, 25\penalty0 (1):\penalty0 1--27,
  2016.

\bibitem[Bishop(2006)]{bishop2006pattern}
C.~M. Bishop.
\newblock \emph{Pattern Recognition and Machine Learning}.
\newblock Springer, 2006.

\bibitem[Bishop and Svens{\'e}n(2003)]{bishop2003bayesian}
C.~M. Bishop and M.~Svens{\'e}n.
\newblock Bayesian hierarchical mixtures of experts.
\newblock \emph{Proc. 19th Conf. Uncertain. Artif. Intell.}, 19:\penalty0
  57--64, 2003.

\bibitem[Blei et~al.(2017)Blei, Kucukelbir, and McAuliffe]{Blei_2017}
D.~M. Blei, A.~Kucukelbir, and J.~D. McAuliffe.
\newblock Variational inference: A review for statisticians.
\newblock \emph{J. Amer. Statist. Assoc.}, 112\penalty0 (518):\penalty0
  859–877, 2017.

\bibitem[B{\"o}hning(1992)]{Bohning1992MultinomialLR}
D.~B{\"o}hning.
\newblock Multinomial logistic regression algorithm.
\newblock \emph{Ann. Inst. Statist. Math.}, 44:\penalty0 197--200, 1992.

\bibitem[B{\"o}hning and Lindsay(1988)]{Bohning1988}
D.~B{\"o}hning and B.~Lindsay.
\newblock Monotonicity of quadratic-approximation algorithms.
\newblock \emph{Ann. Inst. Statist. Math.}, 40:\penalty0 641--663, 1988.

\bibitem[Boyd et~al.(2011)Boyd, Parikh, Chu, Peleato, Eckstein,
  et~al.]{Boyd_2011_ADMM}
S.~Boyd, N.~Parikh, E.~Chu, B.~Peleato, J.~Eckstein, et~al.
\newblock Distributed optimization and statistical learning via the alternating
  direction method of multipliers.
\newblock \emph{Found. Trends Mach. Learn.}, 3\penalty0 (1):\penalty0 1--122,
  2011.

\bibitem[Browne and McNicholas(2015)]{Multivariate_sharp_Qbounds}
R.~P. Browne and P.~D. McNicholas.
\newblock {Multivariate sharp quadratic bounds via $\mathbf{\Sigma}$-strong
  convexity and the Fenchel connection}.
\newblock \emph{Electron. J. Statist.}, 9\penalty0 (2):\penalty0 1913--1938,
  2015.

\bibitem[Carbonetto and Stephens(2012)]{Carbonetto2012ScalableVI}
P.~Carbonetto and M.~Stephens.
\newblock {Scalable variational inference for Bayesian variable selection in
  regression, and its accuracy in genetic association studies}.
\newblock \emph{Bayesian Anal.}, 7:\penalty0 73--108, 2012.

\bibitem[Castiglione and Bernardi(2025)]{Castiglione_2025_pGLMM}
C.~Castiglione and M.~Bernardi.
\newblock Non-conjugate variational bayes for pseudo-likelihood mixed effect
  models.
\newblock \emph{J. Comput. Graph. Statist.}, In press, 2025.

\bibitem[{De Leeuw} and Lange(2009)]{DeLeeuw_09_sharQ}
J.~{De Leeuw} and K.~Lange.
\newblock Sharp quadratic majorization in one dimension.
\newblock \emph{Comput. Statist. Data Anal.}, 53\penalty0 (7):\penalty0
  2471--2484, 2009.

\bibitem[Durante and Rigon(2019)]{Durante_Rigon_2019}
D.~Durante and T.~Rigon.
\newblock {Conditionally conjugate mean-field variational Bayes for logistic
  models}.
\newblock \emph{Statist. Sci.}, 34\penalty0 (3):\penalty0 472 -- 485, 2019.

\bibitem[Ermis and Bouchard(2014)]{Ermis_Bouchard_2014}
B.~Ermis and G.~Bouchard.
\newblock Iterative splits of quadratic bounds for scalable binary tensor
  factorization.
\newblock \emph{Proc. 30th Conf. Uncertain. Artif. Intell.}, 30:\penalty0
  192–199, 2014.

\bibitem[Friedman et~al.(2010)Friedman, Hastie, and
  Tibshirani]{friedman2010regularization}
J.~H. Friedman, T.~Hastie, and R.~Tibshirani.
\newblock Regularization paths for generalized linear models via coordinate
  descent.
\newblock \emph{J. Statist. Softw.}, 33:\penalty0 1--22, 2010.

\bibitem[Goldfarb and Idnani(1983)]{Goldfarb1983_QP}
D.~Goldfarb and A.~Idnani.
\newblock A numerically stable dual method for solving strictly convex
  quadratic programs.
\newblock \emph{Math. Program.}, 27\penalty0 (1):\penalty0 1--33, Sep 1983.

\bibitem[Hastie et~al.(2015)Hastie, Tibshirani, and
  Wainwright]{Hastie2015_book}
T.~Hastie, R.~Tibshirani, and M.~Wainwright.
\newblock \emph{Statistical Learning with Sparsity: The Lasso and
  Generalizations}.
\newblock Chapman \& Hall/CRC, 2015.
\newblock ISBN 1498712169.

\bibitem[Helwig(2025)]{helwig2025versatile}
N.~E. Helwig.
\newblock Versatile descent algorithms for group regularization and variable
  selection in generalized linear models.
\newblock \emph{J. Comput. Graph. Statist.}, 34\penalty0 (1):\penalty0
  239--252, 2025.

\bibitem[Hunter and Lange(2004)]{mm_Hunter2004}
D.~R. Hunter and K.~Lange.
\newblock A tutorial on {MM} algorithms.
\newblock \emph{Am. Statist.}, 58\penalty0 (1):\penalty0 30--37, 2004.

\bibitem[Jaakkola and Jordan(2000)]{Jaakkola_Jordan_2000}
T.~Jaakkola and M.~I. Jordan.
\newblock Bayesian parameter estimation via variational methods.
\newblock \emph{Statist. Comput.}, 10:\penalty0 25--37, 2000.

\bibitem[Javanmard et~al.(2025)Javanmard, Shao, and
  Bien]{javanmard2025prediction}
A.~Javanmard, S.~Shao, and J.~Bien.
\newblock Prediction sets for high-dimensional mixture of experts models.
\newblock \emph{J. R. Statist. Soc. Ser. B Statist. Methodol.}, 87:\penalty0
  850–871, 2025.

\bibitem[Jin et~al.(2025)Jin, Zhang, and Tang]{jin2025variational}
Y.~Jin, Y.~Zhang, and N.~Tang.
\newblock Variational bayesian logistic tensor regression with application to
  image recognition.
\newblock \emph{Bayesian Anal.}, In press, 2025.

\bibitem[Khan et~al.(2012)Khan, Mohamed, Marlin, and
  Murphy]{khan12_multinomial}
M.~Khan, S.~Mohamed, B.~Marlin, and K.~Murphy.
\newblock A stick-breaking likelihood for categorical data analysis with latent
  {G}aussian models.
\newblock \emph{Proc. 15th Int. Conf. Artif. Intell. Statist.}, 22:\penalty0
  610--618, 2012.

\bibitem[Knowles and Minka(2011)]{Knowles_Minka_NIPS2011}
D.~Knowles and T.~Minka.
\newblock Non-conjugate variational message passing for multinomial and binary
  regression.
\newblock \emph{Adv. Neural Inf. Process. Syst.}, 24:\penalty0 1--9, 2011.

\bibitem[Lee et~al.(2010)Lee, Huang, and Hu]{Lee2010SPARSELP}
S.~Lee, J.~Huang, and J.~Hu.
\newblock Sparse logistic principal components analysis for binary data.
\newblock \emph{Ann. Appl. Statist.}, 4:\penalty0 1579--1601, 2010.

\bibitem[Lin and Zhang(2006)]{lin2006component}
Y.~Lin and H.~H. Zhang.
\newblock Component selection and smoothing in multivariate nonparametric
  regression.
\newblock \emph{Ann. Statist.}, 34\penalty0 (5):\penalty0 2272--2297, 2006.

\bibitem[Lindgren et~al.(2011)Lindgren, Rue, and
  Lindstr{\"o}m]{Lindgren_2011_SPDE}
F.~Lindgren, H.~Rue, and J.~Lindstr{\"o}m.
\newblock An explicit link between {G}aussian fields and {G}aussian {M}arkov
  random fields: the stochastic partial differential equation approach.
\newblock \emph{J. R. Statist. Soc. Ser. B Statist. Methodol.}, 73\penalty0
  (4):\penalty0 423--498, 2011.

\bibitem[Marlin et~al.(2011)Marlin, Khan, and
  Murphy]{Marlin_ICML2011_Piecewise_Q_bounds}
B.~Marlin, M.~Khan, and K.~Murphy.
\newblock Piecewise bounds for estimating {B}ernoulli-logistic latent
  {G}aussian models.
\newblock \emph{Proc. 28th Int. Conf. Mach. Learn.}, 1:\penalty0 633--640,
  2011.

\bibitem[McLachlan and Krishnan(1996)]{em_McLachlan1996}
G.~McLachlan and T.~Krishnan.
\newblock \emph{The EM Algorithm and Extensions}.
\newblock Wiley, 1996.

\bibitem[Nesterov and Nemirovskii(1994)]{Nesterov_Nemirovskii_1994_book_QP}
Y.~Nesterov and A.~Nemirovskii.
\newblock \emph{Interior-Point Polynomial Algorithms in Convex Programming}.
\newblock Society for Industrial and Applied Mathematics, 1994.

\bibitem[Ormerod and Wand(2010)]{Ormerod_2010_VB}
J.~T. Ormerod and M.~P. Wand.
\newblock Explaining variational approximations.
\newblock \emph{Am. Statist.}, 64\penalty0 (2):\penalty0 140--153, 2010.

\bibitem[Pastukhov(2024)]{pastukhov2024fused}
V.~Pastukhov.
\newblock Fused lasso nearly-isotonic signal approximation in general
  dimensions.
\newblock \emph{Statist. Comput.}, 34\penalty0 (4):\penalty0 120, 2024.

\bibitem[Polson et~al.(2012)Polson, Scott, and Windle]{Polson2012}
N.~G. Polson, J.~G. Scott, and J.~Windle.
\newblock {Bayesian inference for logistic models using P{\'o}lya–Gamma
  latent variables}.
\newblock \emph{J. Amer. Statist. Assoc.}, 108:\penalty0 1339--1349, 2012.

\bibitem[Ren et~al.(2011)Ren, Du, Carin, and Dunson]{Ren2011LogisticSP}
L.~Ren, L.~Du, L.~Carin, and D.~Dunson.
\newblock Logistic stick-breaking process.
\newblock \emph{J. Mach. Learn. Res.}, 12:\penalty0 203--239, 2011.

\bibitem[Sangalli(2021)]{Sangalli_2021_PDE}
L.~M. Sangalli.
\newblock Spatial regression with partial differential equation regularisation.
\newblock \emph{Int. Statist. Rev.}, 89\penalty0 (3):\penalty0 505--531, 2021.

\bibitem[Sangalli et~al.(2013)Sangalli, Ramsay, and Ramsay]{Sangalli_2013_PDE}
L.~M. Sangalli, J.~O. Ramsay, and T.~O. Ramsay.
\newblock Spatial spline regression models.
\newblock \emph{J. R. Statist. Soc. Ser. B Statist. Methodol.}, 75\penalty0
  (4):\penalty0 681--703, 2013.

\bibitem[Tan and Nott(2013)]{Tan_Nott_2013_GLMM}
L.~S. Tan and D.~J. Nott.
\newblock Variational inference for generalized linear mixed models using
  partially noncentered parametrizations.
\newblock \emph{Statist. Sci.}, 28\penalty0 (2):\penalty0 168--188, 2013.

\bibitem[Tibshirani et~al.(2005)Tibshirani, Saunders, Rosset, Zhu, and
  Knight]{fused_lasso_2005}
R.~Tibshirani, M.~Saunders, S.~Rosset, J.~Zhu, and K.~Knight.
\newblock Sparsity and smoothness via the fused lasso.
\newblock \emph{J. R. Statist. Soc. Ser. B Statist. Methodol.}, 67\penalty0
  (1):\penalty0 91--108, 2005.

\bibitem[Tibshirani(2014)]{tibshirani2014adaptive}
R.~J. Tibshirani.
\newblock Adaptive piecewise polynomial estimation via trend filtering.
\newblock \emph{Ann. Statist.}, 42\penalty0 (1):\penalty0 285--323, 2014.

\bibitem[Tibshirani and Taylor(2011)]{gen_lasso_solution_path_2011}
R.~J. Tibshirani and J.~Taylor.
\newblock The solution path of the generalized lasso.
\newblock \emph{{Ann. Statist.}}, 39\penalty0 (3):\penalty0 1335--1371, 2011.

\bibitem[Wand(2014)]{Wand_2014_KMW}
M.~P. Wand.
\newblock Fully simplified multivariate normal updates in non-conjugate
  variational message passing.
\newblock \emph{J. Mach. Learn. Res.}, 15\penalty0 (39):\penalty0 1351--1369,
  2014.

\bibitem[Wand(2017)]{wand2017fast}
M.~P. Wand.
\newblock Fast approximate inference for arbitrarily large semiparametric
  regression models via message passing.
\newblock \emph{J. Amer. Statist. Assoc.}, 112\penalty0 (517):\penalty0
  137--168, 2017.

\bibitem[Wu and Lange(2010)]{mm_Wu2010}
T.~T. Wu and K.~Lange.
\newblock {The MM alternative to EM}.
\newblock \emph{Statist. Sci.}, 25\penalty0 (4):\penalty0 492--505, 2010.

\bibitem[Zhao et~al.(2012)Zhao, Ogden, and Reiss]{zhao2012wavelet}
Y.~Zhao, R.~T. Ogden, and P.~T. Reiss.
\newblock Wavelet-based lasso in functional linear regression.
\newblock \emph{J. Comput. Graph. Statist.}, 21\penalty0 (3):\penalty0
  600--617, 2012.

\bibitem[Zhu(2017)]{Zhu_2017_ADMM}
Y.~Zhu.
\newblock An augmented {ADMM} algorithm with application to the generalized
  lasso problem.
\newblock \emph{J. Comput. Graph. Statist.}, 26\penalty0 (1):\penalty0
  195--204, 2017.

\bibitem[Zou and Hastie(2005)]{elastic_net_paper}
H.~Zou and T.~Hastie.
\newblock Regularization and variable selection via the elastic net.
\newblock \emph{J. R. Statist. Soc. Ser. B Statist. Methodol.}, 67\penalty0
  (2):\penalty0 301--320, 2005.

\end{thebibliography}

\end{document}